\theoremstyle{plain}
\newtheorem{theorem}{Theorem}[section]
\newtheorem{lemma}[theorem]{Lemma}
\newtheorem{lemmafake}{Lemma 5.2}
\title{Backward Lens: Projecting Language Model Gradients\\ into the Vocabulary Space}
\author{Shahar Katz$^{1}$ ~~~~~ Yonatan Belinkov$^{1}$ ~~~~~ Mor Geva$^{2}$ ~~~~~ Lior Wolf$^{2}$\\
$^1$Faculty of Computer Science, Technion -- Israel Institute of Technology\\
$^2$Blavatnik School of Computer Science, Tel Aviv University\\
\small{\texttt{\{shachar.katz@cs,belinkov@\}technion.ac.il}},
\small{\texttt{\{morgeva@tauex,wolf@cs\}.tau.ac.il}},
}
\begin{document}
\maketitle
\begin{abstract}
Understanding how Transformer-based Language Models (LMs) learn and recall information is a key goal of the deep learning community.
Recent interpretability methods project weights and hidden states obtained from the forward pass to the models' vocabularies, helping to uncover how information flows within LMs.
In this work, we extend this methodology to LMs’ backward pass and gradients. 
We first prove that a gradient matrix can be {cast} as a low-rank linear combination of its forward and backward passes' inputs. 
We then develop methods to project these gradients {into vocabulary items} and explore the mechanics of how new information is stored in the LMs' neurons.

\end{abstract}

\section{Introduction}
\label{Introduction}

Deep learning models consist of layers, which are parameterized by matrices that are trained using a method known as backpropagation. This process involves the creation of gradient matrices that are used to update the models' layers. Backpropagation has been playing a major role in interpreting deep learning models and multiple lines of study aggregate the gradients to provide explainability \cite{simonyan2014deep, sanyal2021discretized, chefer2022optimizing, sarti-etal-2023-inseq, miglani2023using}.

Recent interpretability works have introduced methods to project the weights and intermediate activations of Transformer-based LMs \cite{vaswani2017attention} into the vocabulary space. The seminal ``Logit Lens'' method \cite{Nostalgebraist2020} has paved the way to explaining LMs' behavior 
during inference \cite{geva-etal-2022-lm, Dar2022AnalyzingTI, ram-etal-2023-token}, including directly interpreting individual neurons \cite{geva2021transformer, katz2023visit}. Our work is the first, as far as we can ascertain, to project LM \emph{gradients} to the vocabulary space.
Furthermore, modern LMs contain thousands of neurons in each layer, while certain features are likely distributed across multiple neurons \cite{elhage2022toy,cunningham2023sparse}. These issues are handled in our examination of the gradient matrices by performing a decomposition of provably low-rank matrices.


\begin{figure}[t]
    \centering
    \includegraphics[width=\columnwidth]{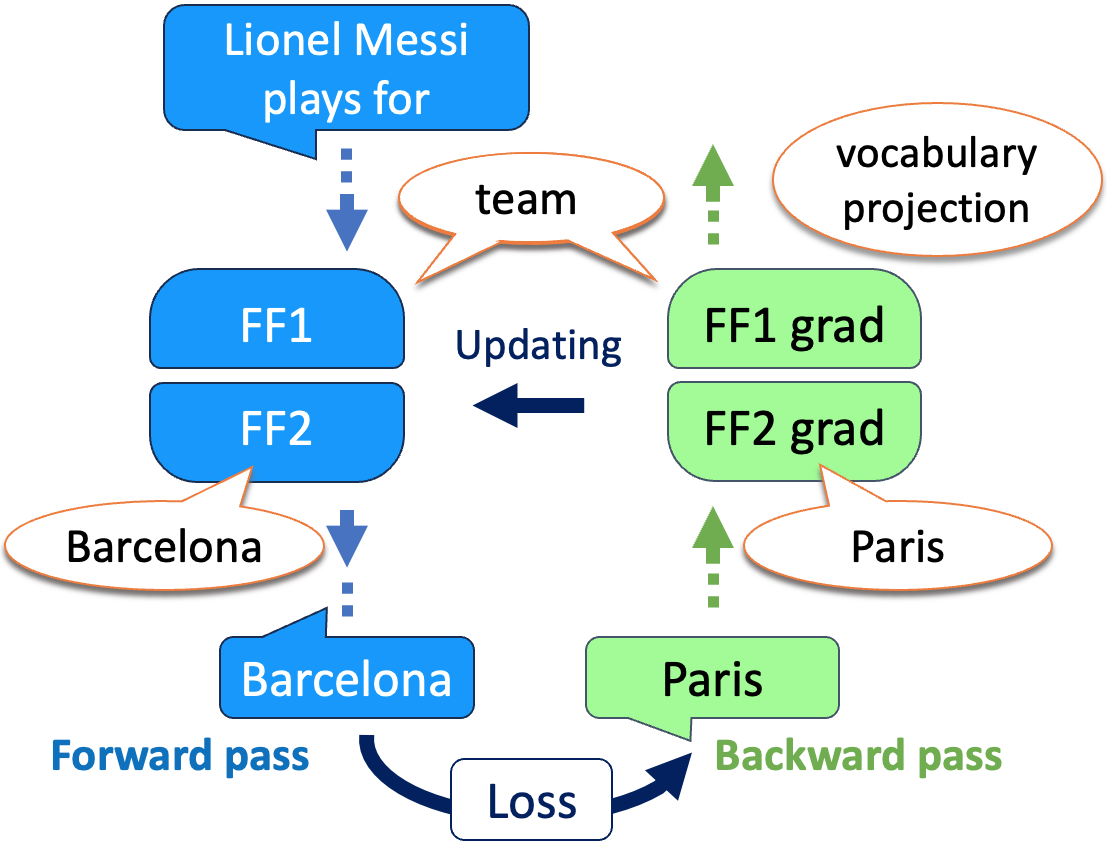}
    \caption{
    An illustration depicting the tokens promoted by a single LM's MLP layer and its gradient during the forward and backward pass when editing the model to answer ``Paris'' for the prompt ``Lionel Messi plays for''. The gradients (in green) of the first MLP matrix, $FF_1$, attempt to imprint into the model's weight (in blue) the information that $FF_1$ encountered during the forward pass.
    Utilizing a vocabulary projection method, we reveal that this information represents the token ``team''.
    The gradients of the second MLP matrix, $FF_2$, aim to shift the information encoded within $FF_2$ towards the embedding of the new target.}
    \label{fig:backward lens}
\end{figure}

Despite the popularity of LMs, our understanding of their behavior remains incomplete \cite{bender2021dangers, dwivedi2023so}, particularly regarding how LMs acquire new knowledge during training and the mechanisms by which they store and recall it \cite{dai2022knowledge, geva2021transformer, geva-etal-2023-dissecting, meng2022mass}. 
We identify a mechanism we refer to as ``imprint and shift'', which captures how information is stored in the feed-forward (MLP) module of the transformer layer. This module has two fully connected layers, $FF_1$ and $FF_2$. The ``imprint'' refers to the first layer, to or from which the learning process adds or subtracts copies of the intermediate inputs encountered during the forward pass. The ``shift'' refers to the second matrix, where the weights are shifted by the embedding of the target token, see \autoref{fig:backward lens}.

In summary, our contributions are: 
(i) Analyzing the rank of gradients.  
(ii) Interpreting gradients by inspecting relatively small spanning sets. 
(iii) Investigating the embedding of these sets by projecting them into tokens, including (iv) examining the Vector-Jacobian Product (VJP) obtained during the backward pass (the equivalent of the hidden states of the forward pass). 
Furthermore, (iv) we reveal a two-phase mechanism by which models learn to store knowledge in their MLP layers,
and (v) leverage it to explore a novel editing method based solely on a single forward pass.

\section{Related Work}
Developing methods to explain LMs is central to the interpretability community \cite{belinkov-glass-2019-analysis, srivastava2023beyond}. Initially inspired by interpretability efforts in vision models \cite{Wojciech2017, zhang2018visual, indolia2018conceptual, olah2020naturally}, LMs have also benefited from the ability to operate in the language domain. This includes leveraging projections of vectors into readable concepts \cite{Nostalgebraist2020, simhi2023} or clustering them into the idioms they promote \cite{cunningham2023sparse, tamkin2023codebook, tigges2023linear, bricken2023monosemanticity}. 

Reverse engineering the gradient's role in shifting model behavior has been a primary method to comprehend the mechanics of deep learning models. Recent work
\cite{ilharco2022editing, gueta2023knowledge, tian2023scan} demonstrates that clustering the weights that models learn during training or fine-tuning reveals patterns that connect the tasks and their training data. In general, existing works examine gradients by observing full matrices, whereas our approach involves interpreting them using the backward pass's VJP. 
Specifically, approaches employing Saliency Maps \cite{simonyan2014deep} explore the relationship between a gradient matrix and parts of its corresponding forward pass input, where we have observed that gradients are spans, linear combinations, of those inputs.

Our experiment regarding LM editing adds to the line of work that utilizes interpretability for knowledge editing. The closest idea to our implementation was introduced by \citet{dai2022knowledge}, who identified activated neurons for specific idioms in encoder LMs and altered them by injecting the embedded target. We show that gradients work in a very similar way. Other state-of-the-art model editing methods include \citet{mitchell2021fast} and  \citet{meng2022locating, meng2022mass}.

Our work analyzes the gradient's rank. It was previously known that gradients are low-rank \cite{mitchell2021fast}, but the utilization of this characteristic for interpretability study or predicting the rank of an edited prompt have remained unexplored.
Optimizers and adaptors, such as LoRA \citet{hu2022lora}, were created to constrain the rank of gradients, making fine-tuning faster. Our work, in contrast, shows that the gradients are already low-rank and utilizes this phenomenon.

\section{Background} 
We first provide background on transformers, focusing on the components that play a role in our analysis and omitting other concepts, such as Layer Norms and positional embedding, which are explored in full by \citet{vaswani2017attention, Radford2019LanguageMA}. We then provide the necessary background on the backward pass; a more comprehensive description is given by \citet{clark2017computing, bishop2006pattern}. Finally, we discuss the building blocks of the Logit Lens method.

\subsection{Transformer LMs}
The Generative Pre-trained Transformer (GPT), is an auto-regressive family of architectures containing multiple transformer blocks. Given a prompt sequence of $n$ tokens, GPT predicts a single token. The architecture maintains an embedding dimension $d$ throughout all layers. First, the input tokens are embedded using an embedding matrix $E$ into the input vectors $X\in \mathbb{R}^{n\times d}$. This is mirrored at the final stage, in which a decoding matrix $D$ projects the output of the last transformer block into a score for each token within the vocabulary.

Each transformer block comprises an attention layer (Attn) and a Multi-Layer Perceptron (MLP) layer, interconnected by a residual stream. 
The attention mechanism transfer vectors (information) from each of the preceding inputs to the current forward pass. In our study, we do not delve into this module and refer the reader to \citet{radfordimproving} for more details.

The MLP layer (also known as FFN, Feed-Forward Network) consists of two fully connected matrices $FF_1$, $FF_2^T\in \mathbb{R}^{d\times d_m}$, with an activation function $f$ between them: $\text{MLP}(X)=f(X FF_1) FF_2$.

Hence, the calculation that the $l$-th transformer block performs on its input hidden state, $X^l$, is given by     $X^{l+1} = X^l + \text{Attn}(X^l) + \text{MLP}(\text{Attn}(X^l)+X^l)$.

\subsection{Backpropagation}
\label{Backpropagation}
Backpropagation \cite{rumelhart1986learning, le1988theoretical} is an application of the chain rule to compute derivatives and update weights in the optimization of deep learning network-based models. 
The process begins with the model executing a forward pass, generating a prediction $\hat{y}$, which is subsequently compared to a desired target by quantifying the disparity through a loss score $L$. Following this, a backward pass is initiated, iterating through the model's layers and computing the layers’ gradients in the reverse order of the forward pass.

For a given layer of the model that during the forward pass computed $z=xW$, where $x \in \mathbb{R}^{m}, z\in \mathbb{R}^{n}$ are its intermediate input and output, we compute its gradient matrix using the chain rule: 
\begin{equation}
\label{formula:chain rule}
\frac{\partial L}{\partial W}=\frac{\partial z}{\partial W}\frac{\partial L}{\partial z} {\in\mathbb{R}^{n\times m}}
\end{equation} 
We can directly compute $\frac{\partial z}{\partial W}=\frac{\partial x W}{\partial W}=x^\top$.
The other derivative $\delta=\frac{\partial L}{\partial z}\in \mathbb{R}^n$ is known as the Vector-Jacobian Product (VJP) of $z$. It can be thought of as the hidden state of the backward pass and is the error factor that later layers project back.

In LMs, the output of the model is an unnormalized vector, $\hat{y}\in\mathbb{R}^{|\text{vocabulary}|}$, representing a score for each of the model's tokens. We denote the target token by an index $t\in [|\text{vocabulary}|]$. Typically the 
Negative Log-Likelihood (NLL) loss is used:
\begin{align}
\label{eq: softamx and nll forward pass}
\hat{p}=&\text{Softmax}(\hat{y})\in\mathbb{R}^{|\text{vocabulary}|}
\\
L=&\text{NLL}(\hat{p},t)=-\log{(\hat{p}[t])}\in\mathbb{R}
\end{align}
where $\hat{p}$ represents the normalized probabilities of $\hat{y}$ and $[t]$ is its $t$-th value (the target token's probability).
For the last layer's output $z=\hat{y}$, calculating its $\delta$ (VJP) can be done directly by ($k \in [|\text{vocabulary}|]$):
\begin{align}
\label{formula: NLL direct VJP}
    \delta[k] = \begin{cases}
    \hat{p}[k]-1\leq0 \ \ \ \ \ \ \ \  & \text{if } k = t \\
    \hat{p}[k]\geq0  & \text{otherwise}
\end{cases}
\end{align}

For an earlier layer in the model $l$, we cannot compute the VJP of its output $z^l$ directly (here $^{l}$ indicates the layer's index). 
Since we iterate the model in a reverse order, we can assume we already computed the VJP of layer $l+1$.
If the layers are sequential, the output of layer $l$ is the input of $l+1$, therefrom $z^l=x^{l+1}$. Utilizing the backward step, we can compute:
\begin{equation}
\label{formula: x VJP by backward step}
\delta^l=\frac{\partial L}{\partial z^l}=\frac{\partial L}{\partial x^{l+1}}=\delta^{l+1} (W^{l+1})^\top
\end{equation}
To summarize, in deep learning models, the gradient of a loss function $L$ with respect to a given layer $W$, is the outer product of the layer's forward pass input, $x$, and its output $z$'s VJP, $\delta$:
\begin{equation}
\label{formula:x^t*delta}
\frac{\partial L}{\partial W}=\frac{\partial z}{\partial W}\frac{\partial L}{\partial z}=x^\top \cdot \delta \in{\mathbb{R}^{n\times m}}
\end{equation}

\subsection{Vocabulary Projection Methods}
\label{Logit Lens and Projecting Methods}

\citet{Nostalgebraist2020} discovered that we can transform hidden states from LMs forward passes into vocabulary probabilities, thereby reflecting their intermediate predictions. Termed as \textbf{Logit Lens (LL)}, this method projects a vector $x$ in the size of the embedding space $d$ by applying it with the LM's decoding, the process that transforms the last transformer block's output into a prediction: 
\begin{equation}
\label{eq: LL}
    \text{LL}(x)=\text{Softmax}(ln_f(x)D)\in\mathbb{R}^{|\text{vocabulary}|}
\end{equation}
where $ln_f$ is the model’s last Layer Norm before the decoding matrix $D$.

The projection captures the gradual building of LMs output \cite{millidge2022singular, haviv2022understanding}, and projections from later layers are more interpretable than earlier ones. Efforts such as \citet{din2023jump, belrose2023eliciting} try to solve this gap by incorporating learned transformations into LL. However, to emphasize our main discoveries, we have not included such enhancements, which primarily aim to shortcut the models’ computations and require dedicated training procedures.

An artificial neuron performs a weighted sum of its inputs, and appears as a column or a row of the model's matrices 
taken along a direction that has a dimensionality $d$.  
Static neurons can also be projected into tokens using LL: \citet{geva2021transformer}, \citet{geva-etal-2022-transformer} observe that neurons of the first MLP matrix $FF_1$ determine the extent to which each neuron in $FF_2$ contributes to the intermediate prediction. \citet{Dar2022AnalyzingTI}, \citet{geva-etal-2023-dissecting} employ the same approach to investigate the attention matrices. \citet{elhage2021mathematical}, \citet{katz2023visit} demonstrate how these neurons can elucidate model behavior, \citet{wang-etal-2023-label}, \citet{millidge2022singular}, \citet{todd2023function} use it to explore circuits and in-context learning. 

Despite the growing interest in this approach, we are only aware of works that have applied it to the static weights of models or the hidden states of the forward pass. In contrast, our work is focused on the backward pass of LMs.

\section{Backward Lens}
\label{sec:backlens}
In this section, we detail the methods we developed to analyze gradients based on our understanding of how each gradient matrix is formed.

\subsection{Gradients as Low-Rank Matrices} 
\label{LMs' Gradints' Ranks}
\citet{hu2022lora} and \citet{mitchell2021fast} have observed the low-rank of MLP layers' gradients with a single input. 
However, they did not explain this phenomenon in the context of a matrix with a sequence of inputs, nor did they predict this rank. The following Lemma does both.

\begin{lemma}
\label{lemma: ranks}
    Given a sequence of inputs of length $n$, a parametric matrix $W$ and a loss function $L$, the gradient $\frac{\partial L}{\partial W}$  produced by a backward pass is a matrix with a rank of $n$ or lower.
\end{lemma}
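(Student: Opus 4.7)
The plan is to bootstrap from the single-input identity already stated in equation~\eqref{formula:x^t*delta} and then lift it to a sequence by exploiting linearity of differentiation. First I would recall that for a single token passing through $W$, the paper has already derived $\partial L/\partial W = x^\top \delta$, which is an outer product of a vector in $\mathbb{R}^{m}$ and one in $\mathbb{R}^{n}$, hence a matrix of rank at most one. This observation is the atomic rank-one building block from which the sequence case is assembled.

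Next I would handle the sequence of length $n$. For each position $i \in \{1,\dots,n\}$, the layer computes $z_i = x_i W$ with its own input $x_i$ and its own output VJP $\delta_i = \partial L/\partial z_i$. Because $W$ enters $L$ through each $z_i$ independently, the multivariate chain rule gives
\begin{equation}
\frac{\partial L}{\partial W} \;=\; \sum_{i=1}^{n} \frac{\partial z_i}{\partial W}\,\frac{\partial L}{\partial z_i} \;=\; \sum_{i=1}^{n} x_i^\top \delta_i .
\end{equation}
Stacking the inputs into $X \in \mathbb{R}^{n \times m}$ (row $i$ equals $x_i$) and the VJPs into $\Delta \in \mathbb{R}^{n \times d}$ (row $i$ equals $\delta_i$), this sum compactifies as $\partial L/\partial W = X^\top \Delta$.

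To conclude, I would invoke either of two equivalent rank arguments. Either subadditivity of rank, noting that the sum is made of $n$ rank-one outer products so the total rank is bounded by $n$; or the product-rank inequality $\operatorname{rank}(X^\top \Delta)\le \min(\operatorname{rank}(X^\top),\operatorname{rank}(\Delta)) \le n$, since both factors have only $n$ rows/columns along the shared dimension. Either route yields the stated bound $\operatorname{rank}(\partial L/\partial W)\le n$.

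The only point that requires some care, and which I regard as the main potential obstacle, is justifying that the per-position contributions genuinely combine additively even when later layers (e.g.\ self-attention) mix information across positions. The subtlety is resolved by observing that the cross-position mixing is already baked into each $\delta_i$ through the backward pass of the downstream layers; once those $\delta_i$ are taken as given, the chain rule applied to $W$ alone is a clean sum over positions, preserving the rank-one-per-position structure. No further structural assumption on $L$ or on the downstream architecture is needed beyond differentiability.
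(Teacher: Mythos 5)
Your proposal is correct and follows essentially the same route as the paper: decompose the gradient as $\sum_{i=1}^{n} x_i^\top \delta_i$ via the chain rule over token positions and bound the rank by $n$ since it is a sum of $n$ rank-one outer products (your compact form $X^\top \Delta$ and the product-rank inequality are an equivalent restatement). Your closing remark about cross-position mixing being absorbed into each $\delta_i$ is a worthwhile clarification that the paper leaves implicit, but it does not change the argument.
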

\begin{proof}
According to \autoref{formula:x^t*delta}, the gradient of a matrix is $\frac{\partial L}{\partial W}=x^\top \cdot \delta$. Assuming $x,\delta$ are non-zero vectors, the rank of the gradient matrix is 1, given its interpretation as a span of a single column vector $x$, or equivalently, as a span of a single row $\delta$. In the case when $x$ or $\delta$ is a zero vector, the rank of the gradient matrix is 0. 

In LMs, an input prompt comprises a sequence of $n$ tokens, each of which introduces an intermediate input ($x_i$) at every layer. In this case, the gradient matrix is the sum of each $x_i, \delta_i$'s product:  
\begin{equation}
\label{formula: our decomposition}
\frac{\partial L}{\partial W}=\sum_{i=1}^{n}\frac{\partial z_i}{\partial W}\frac{\partial L}{\partial z_i}=\sum_{i=1}^{n} x_i^\top \cdot \delta_i
\end{equation}
The maximum rank of the summed gradient matrix is $n$ given each $x_i$, or $\delta_i$,  is linearly independent, since we sum $n$ distinct rank-1 matrices.
Reasons for this rank to be lower than $n$ are the existence of linear dependencies between $x_i$ or between $\delta_i$, with 0 being the minimum possible rank.
\end{proof}

Of particular interest is the case of the last layer of the transformer. In this case, the rank of the gradient is one, see \autoref{The Rank of The Last Layer}.

\subsection{Applying Logit Lens to Gradient Matrices}
\label{Applying Logic Lens to Gradient Matrices}
\label{Interpreting the MLP’s Gradients via Decomposition}
In our analysis we focus on the MLP layers, due to recent interest in identifying and editing the knowledge stored in these layers \cite{geva-etal-2022-transformer, geva2021transformer, dai2022knowledge, mitchell2021fast, meng2022locating}.
Consider the MLP modules $FF_1$ and $FF_2$. The first maps from $\mathbb{R}^d$ to $\mathbb{R}^{d_m}$, which is typically, for many transformers,  $4d$. The second maps from the latter dimension to the former. In both cases, the gradient matrix has one dimension of $d_m$ and one of $d$. Exploring all these dimensions is prohibitive, see \autoref{Why Decomposed Gradient Analysis Makes Sense}. 
However,  \autoref{formula: our decomposition} reveals that every gradient matrix is a sum of $n$ outer products $x_i^\top \cdot \delta_i$. 
This view allows us to examine every gradient matrix as a sum of $n$ pairs of vectors. Our analysis, therefore, would focus on only $n<<d_m$ vectors in $\mathbb{R}^d$. 

Every matrix formed by $x_i^\top \cdot \delta_i$ can be interpreted in two ways simultaneously: (1) as a span (linear combination) of $x_i$ and (2) as a span of $\delta_i$. \autoref{fig:span explained} illustrate the two viewpoints.
We utilize this duality and examine gradients as the linear combinations of $n$ vectors: $x_i$ or $\delta_i$.

\begin{figure}[t]
    \centering
    \includegraphics[width=\columnwidth]{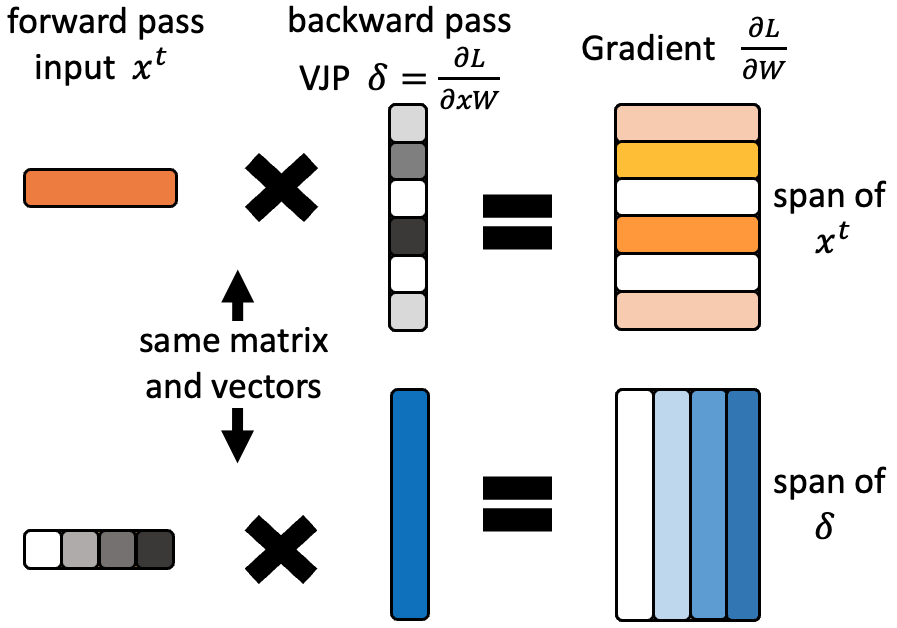}
    \caption{The calculation of gradient matrix by the outer product of $x^\top \cdot \delta$. Each row consists of the same values, but above we describe the matrix as a span of $\delta$, while below as a span of $x^\top$. The displayed vectors are presented transposed to emphasize the spanning effect.}
    \label{fig:span explained}
\end{figure}

\paragraph{The gradients of $FF_1$}\label{The gradients of $FF_1$}
$\delta_i$ is not of size $d$, but $x_i$ are $d$-sized vectors and were already explored using LL \cite{geva-etal-2022-transformer, Dar2022AnalyzingTI}.
Therefore, for $FF_1$ we chose to observe the gradient matrix as a span of $x_i$.
Explicitly, we refer to $x_i$ as $FF_1$'s \textbf{spanning set} since the $j$-th neuron of the gradient matrix is equal to a linear combination of $x_i$:
\begin{equation}
\label{eq:RVD FF1}
    \frac{\partial L}{\partial FF_1}[j]=\sum_{i=1}^n {x_i^\top} \cdot \delta_i [j]\,,
\end{equation}
where $\delta_i [j]$ is the $j$-th element of the vector $\delta_i$.

\paragraph{The gradients of $FF_2$}\label{The gradients of $FF_2$}
The sizes of $FF_2$'s $x_i, \delta_i$ are switched from those of $FF_1$, hence
we chose $\delta_i$ as $FF_2$'s gradient spanning set. Thus, its gradient's $j$-th neuron is viewed as a combination of $\delta_i$:
\begin{equation}
\label{eq:RVD FF2}
\frac{\partial L}{\partial FF_2}[j]=\sum_{i=1}^n \delta_i\cdot x_i [j]\,,
\end{equation}
where $x_i [j]$ is the $j$-th element of the vector $x_i$. In \autoref{Understanding the Backward Pass Process} we provide a theoretical explanation of why the choice of the VJP $\delta_i$ is not only a technical one, due to dimensionality considerations. 

\begin{figure*}[t]
\centering
\includegraphics[width=0.95\textwidth]{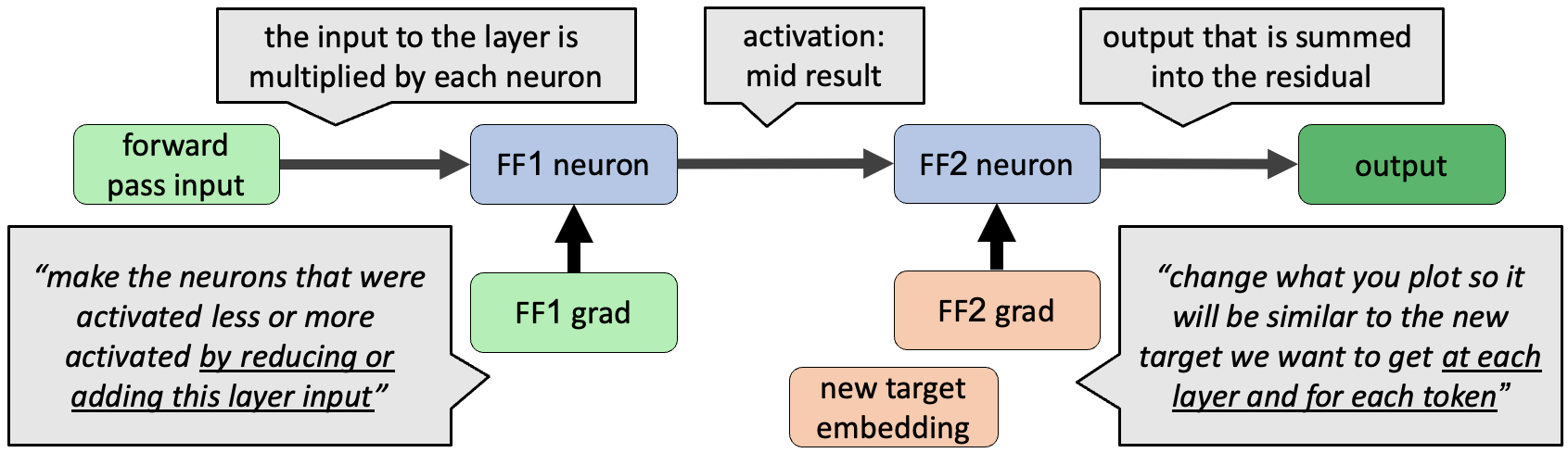}
\caption{The \textbf{Imprint and Shift} mechanism of backpropagation.``grad'' represent a single neuron from a gradient matrix. The color of $FF_1$ grad is the same as the forward-pass input, while $FF_2$ is the same as the new target embedding, suggesting that they are similar to each other.}
\label{fig:trap and shift}
\end{figure*}

\section{Understanding the Backward Pass}
\label{sec:understandbackpass}
\label{Understanding the Backward Pass Process}

The VJPs, $\delta_i$, are the hidden states of the backward pass and the vectors that constitute the gradient matrices (\autoref{Backpropagation}). In this section, we try to shed light on what information is encoded in the VJP. Additionally, we aim to explain how \autoref{eq:RVD FF1} and \autoref{eq:RVD FF2} cause the model to change its internal knowledge.
For simplicity, we ignore Dropouts and Layer Norms.

\subsection{The VJPs of the Top Layer}
\label{The VJPs of the Top Layer}
In this section, we analyze the initial VJPs that are created during the editing of a single prompt, as we describe in \autoref{Backpropagation}.
The last matrix of an LM, which is the final model parameter used before calculating the loss score, is the decoding matrix $D\in\mathbb{R}^{d\times|\text{vocabulary}|}$.
During the forward pass, this matrix calculates the output vectors $x_iD=\hat{y}_i$.
When editing a prompt, we only use the final prediction of the last prompt's token $x_nD=\hat{y}_n$ to calculate the loss score.

We calculated $D$'s VJP of the last token $\delta_n$.
According to \autoref{formula: x VJP by backward step}, the backward pass' VJP to the layer that preceded $D$ is:
\begin{equation}
    \frac{\partial L}{\partial x_n}=\frac{\partial L}{\partial \hat{y}}D^\top=\delta_n D^\top\in\mathbb{R}^d
\end{equation}
This result can be simplified as a weighted sum of $D$'s columns:
\begin{equation}
\delta_n D^\top=\sum_{k=1}^{|\text{vocabulary}|}\delta_n[k]D^\top[k]\,,
\end{equation}
where $D^\top[k]\in \mathbb{R}^{d}$ is the $k$-th neuron of $D$ and also {the embedding of the model's $k$-th token.} 
From the equation, $\delta_n[k]\in\mathbb{R}$ controls the magnitude by which we add the embedding of the $k$-th token into $\frac{\partial L}{\partial x_n}$. 
Pluggin  \autoref{formula: NLL direct VJP} and using the notation $\hat{p}=\text{Softmax}(\hat{y}_n), t$ from \autoref{eq: softamx and nll forward pass}:
\begin{lemma}
\label{observation: initial backward vjp}
The VJP $ \delta_n D^\top $ passed at the beginning of a backward pass is a vector in $\mathbb{R}^d$ that is a sum of weighted token embeddings.
It is dominated by the embedding of the target token, $D^\top[t]$, multiplied by a negative coefficient $\delta_n[t]=\hat{p}[t]-1$. The embedding of all other tokens $k\neq t$ are scaled by a positive coefficient $\hat{p}[k] D^\top[k]$.
\end{lemma}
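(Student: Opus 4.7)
The plan is to assemble three already-established facts: (i) the expansion $\delta_n D^\top = \sum_k \delta_n[k]\, D^\top[k]$ that is written out just above the statement, (ii) the identification of $D^\top[k]\in\mathbb{R}^d$ as the embedding of the $k$-th vocabulary token, and (iii) the closed-form VJP of the NLL loss supplied by \autoref{formula: NLL direct VJP}. Given these, the proof reduces to a substitution followed by a sign check and a short argument for the dominance claim.

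First I would substitute the explicit values of $\delta_n[k]$ into the weighted sum and split off the target index $t$ from the remaining indices. The index $k=t$ contributes $(\hat{p}[t]-1)\, D^\top[t]$, and each $k\neq t$ contributes $\hat{p}[k]\, D^\top[k]$. This immediately yields the first assertion of the statement, namely that $\delta_n D^\top$ is a linear combination of token embeddings, and also identifies all the coefficients claimed.

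Next I would verify the signs. Because $\hat{p}=\text{Softmax}(\hat{y}_n)$ lies in the open simplex (assuming a non-degenerate output), every $\hat{p}[k]$ is strictly positive and $\hat{p}[t]<1$; hence $\hat{p}[t]-1<0$ and $\hat{p}[k]>0$ for all $k\neq t$, matching the inequalities in the statement.

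The only nontrivial step, and the main obstacle, is the ``dominated by'' clause, which is not a pure identity. I would justify it by appealing to the editing setting highlighted at the start of the section: at the beginning of such a backward pass the model has not yet been trained to output $t$, so $\hat{p}[t]$ is small and $|\hat{p}[t]-1|\approx 1$, while the residual mass $1-\hat{p}[t]$ is necessarily spread across the $|\text{vocabulary}|-1$ remaining tokens, so the individual coefficients $\hat{p}[k]$ for $k\neq t$ are typically orders of magnitude smaller than $|\hat{p}[t]-1|$. A fully formal version would require ruling out the pathological case where a single non-target token already carries nearly all the probability mass; I would therefore phrase the dominance either as a typical-case observation in the editing regime or under a mild hypothesis such as $\max_{k\neq t}\hat{p}[k] \ll 1-\hat{p}[t]$, which is the setting the rest of the paper is concerned with.
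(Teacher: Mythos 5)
Your proposal is correct and follows essentially the same route as the paper, which likewise expands $\delta_n D^\top=\sum_k \delta_n[k]D^\top[k]$, identifies $D^\top[k]$ as the $k$-th token embedding, and substitutes the NLL VJP from \autoref{formula: NLL direct VJP} to read off the signs of the coefficients. Your explicit caveat on the ``dominated by'' clause is, if anything, more careful than the paper's own treatment, which justifies dominance only informally (small $\hat{p}[t]$ in the editing regime, plus a homogeneity assumption on the embeddings).
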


If we ignore Dropouts and Layer Norms, the VJP $\delta_nD^\top$ is the initial vector to be passed in the backward pass. In particular, this is the {only VJP to span the last MLP layer's gradient. The use of residual streams implies that while the backward pass iterates the model in a reverse order, this vector skips to previous layers, hence it will be part of the span of all the MLPs' gradients.

The LL of $\delta_n$ is provided as $\text{Softmax}(ln_f(\delta_n D^\top)$. Except for $ln_f$,
this behaves similarly to $\text{Softmax}(\delta_n D^\top)$. As the lemma shows, $\delta_n D^\top[t]$ is negative, while $\delta_n D^\top[k]$ is positive for all tokens $k\neq t$. By assuming homogeneity of the embedding vectors and independence of the coefficients, we can expect the target embedding to have the lowest probability in the softmax. In practice, since related tokens have more similar embeddings and similar entries in $y_n$, this effect is expected to be even more pronounced.

\subsection{Storing Knowledge in LMs}
\label{Trap and Shift}
In \autoref{Applying Logic Lens to Gradient Matrices} we observed that each neuron in the MLP's gradients is a sum of vectors in $\mathbb{R}^d$ from the forward and backward passes, $x_i$ and $\delta_i$ respectively.
Based on this observation, we aim to understand how LM editing with a single prompt and a single backward pass changes the internal knowledge of a model.
Explicitly, we study the implications of updating a weight matrix with its gradients:
$
    W\leftarrow W +  \eta \frac{\partial L}{\partial W}^\top
$
,where $\eta\in\mathbb{R}$ is a negative learning rate.

\begin{restatable}{lemma}{lemma52}
\label{lemma: imprint and shift}
When updating an MLP layer of an LM using backpropagation and rerunning the layer with the same inputs $x_i$ from the forward pass of the prompt we used for the editing, the following occurs:
(i) The inputs, $x_i$, are added or subtracted from the neurons of $FF_1$,
thereby adjusting how much the activations of each corresponding neuron in $FF_2$ would increase or decrease.
(ii) The VJPs $\delta_i$ are subtracted from the neurons of $FF_2$, amplifying in $FF_2$'s output the presence of the VJPs after they are multiplied with negative coefficients.
\end{restatable}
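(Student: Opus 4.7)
The plan is to substitute the gradient decompositions from \autoref{eq:RVD FF1} and \autoref{eq:RVD FF2} directly into the update rule $W \leftarrow W + \eta (\partial L/\partial W)^\top$ (with $\eta<0$), read off the change per neuron, and then trace what the updated layer does when rerun on the same $x_i$. The two parts of the lemma are parallel: each asserts first what kind of vector is added to a neuron (an input $x_i$ for $FF_1$, a VJP $\delta_i$ for $FF_2$), and then what functional effect this has on the reran forward pass.

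For part (i), the $j$-th neuron of $FF_1$ is a $d$-dimensional column, and \autoref{eq:RVD FF1} tells us that its component of the gradient is $\sum_i \delta_i[j]\,x_i^\top$. Multiplying by the negative $\eta$ and adding to the neuron yields exactly a signed linear combination of the forward-pass inputs $x_i$, establishing the ``imprinting'' half of (i). To obtain the activation-adjustment half, I would rerun the layer on an input $x_k$ and compute the preactivation change at neuron $j$: this is $\eta\sum_i \delta_i[j]\,(x_k\!\cdot\!x_i)$, and under a diagonal-dominance observation about transformer hidden states (the self inner product $\|x_k\|^2$ dominates the cross inner products $x_k\!\cdot\!x_i$ for $i\neq k$ within a single prompt), the leading summand is $\eta\delta_k[j]\|x_k\|^2$. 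Since $\eta<0$, its sign is opposite to that of $\delta_k[j]$, so the activation at slot $j$ grows or shrinks according to the VJP coefficient, and consequently row $j$ of $FF_2$ contributes more or less to the output.

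For part (ii), \autoref{eq:RVD FF2} gives the update to the $j$-th neuron (a row) of $FF_2$ as $\eta\sum_i x_i[j]\,\delta_i$. Because $\eta<0$ and the post-activation values $x_i[j]$ are typically nonnegative for common activations (ReLU, and GELU in its positive branch), each $\delta_i$ is effectively subtracted from the neuron with magnitude $|\eta|\,x_i[j]\ge 0$, matching the ``subtracted from'' phrasing. Rerunning the layer on hidden activation $a$ then adds $\eta\sum_i (a\!\cdot\!x_i)\,\delta_i$ to the output; the same diagonal-dominance heuristic applied to $a=x_k$ identifies $\eta\|x_k\|^2\delta_k$ as the leading contribution. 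Thus $\delta_k$ reappears in the output multiplied by a negative coefficient, which is the ``amplification'' claim. Chaining with \autoref{observation: initial backward vjp}, whose dominant term of $\delta_n D^\top$ is the target embedding $D^\top[t]$ carrying the coefficient $\hat{p}[t]-1<0$, this negative coefficient flips that sign and pushes the residual stream toward $D^\top[t]$, which is the ``shift''.

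The main obstacle I expect is making the diagonal-dominance step precise: the inequality $\|x_k\|^2 \gg |x_k\!\cdot\!x_i|$ for $i\neq k$ is only approximate, and its tightness depends on how orthogonal the per-position hidden states actually are in a trained transformer. A fully rigorous treatment would require either an approximate-orthogonality hypothesis on the sequence $(x_i)$, an explicit off-diagonal Gram bound, or restriction to the $n=1$ case where the sum collapses and the statement becomes exact. I expect the lemma is best read as a qualitative mechanistic description supported by this approximation and backed up by the empirical sections of the paper, rather than a pointwise identity.
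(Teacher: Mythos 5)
Your proposal is correct and follows essentially the same route as the paper's proof (\autoref{Proof of Lemma 5.2}): substitute the outer-product decompositions \autoref{eq:RVD FF1} and \autoref{eq:RVD FF2} into the SGD update, rerun the layer, and read off the extra term $\eta\,\lVert x_i\rVert_2^2\,\delta_i[j]$ for $FF_1$ (whose sign tracks $\delta_i[j]$ since $\eta<0$) and $\eta\,(x_i[j])^2\,\delta_i$ for $FF_2$ (always a nonpositive multiple of $\delta_i$). The one substantive divergence is how the sum over tokens is handled: you rerun against the fully summed gradient and must invoke a diagonal-dominance / approximate-orthogonality hypothesis on $(x_i)$ to isolate the $k$-th term, whereas the paper analyzes a single rank-one summand at a time --- the effect of token $i$'s own gradient contribution on token $i$'s rerun --- which makes each displayed identity exact but silently drops the same cross terms $x_k\cdot x_i$, $i\neq k$, that you flag; your version at least names the missing assumption, and your fallback to the $n=1$ case is exactly the regime in which the paper's per-term computation is the whole story. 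Two minor alignment notes: the paper makes explicit the (semi-)monotonicity of the activation (ReLU, or GeLU above roughly $-0.75$) needed in part (i) to convert the preactivation change into an activation change feeding $FF_2$, which you leave implicit; and in part (ii) the rerun coefficient $(x_i[j])^2$ is nonnegative regardless of the activation's sign, so your appeal to nonnegative post-activations is only needed for the literal reading of the weight update as a subtraction of $\delta_i$, not for the ``amplified with a negative coefficient'' claim about the output.
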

\noindent See \autoref{Proof of Lemma 5.2} for the proof.

Since the change in $FF_1$ uses the given inputs $x_i$ to amplify future activation, we term this mechanism ``imprint''. The modification of $FF_2$ is termed as the ``shift'', since it represents a process of altering the output of the layer.
In summary, the ``imprint and shift'' mechanism depicts the MLP's learning process during a single backward pass as having two phases:
Given the layer's original input and the new target, the process imprints a similar input through the update of $FF_1$ and subsequently shifts the output of $FF_2$ towards the new target. \autoref{fig:trap and shift} illustrates this process.

\textbf{LL ranking} refers to the index assigned to the vocabulary's tokens when ordered by the probability scores generated by LL (\autoref{eq: LL}). Updating $FF_1$ involves adding or subtracting $x_i$ from weights, focusing on the most probable tokens from the LL ranking. Conversely, for $FF_2$, updating entails subtracting $\delta_i$, effectively adding $-1 \cdot \delta_i$. 
This subtraction reverses the LL rankings, turning previously least probable tokens into most probable ones. Thus, when utilizing LL with $FF_2$'s $\delta_i$, attention should be given to the least probable tokens from the projections.

\section{Experiments}
\label{ref:exp}
We conduct a series of experiments to support the results of Sec.~\ref{sec:backlens} and~\ref{sec:understandbackpass}, as well as to briefly demonstrate their application to LM analysis.

We employ GPT2 \cite{Radford2019LanguageMA} and Llama2-7B \cite{touvron2023llama} in our experiments. 
We randomly sampled 100 prompts and their corresponding  editing targets from the CounterFact dataset \cite{meng2022locating}. For each model and prompt, we conducted a single backpropagation using SGD and without scaling optimizers, such as Adam~\citep{kingma2014adam}, and no batching.

\begin{figure}[t]
\includegraphics[width=\columnwidth]{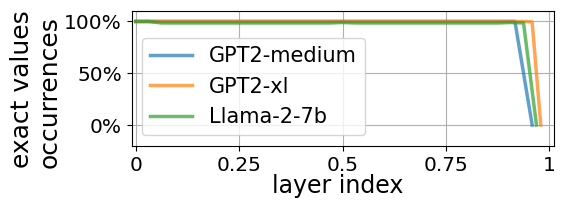}
    \caption{The percentage of occurrences where the rank of $FF_1$'s gradient equals the length of the prompt used for editing.
    To show different models in the same plot, we normalize the layer indices.
    Except for the last layer, all layers and models exhibit the above equality more than $98.5\%$ of the time.}
    \label{fig:FF1 ranks vs prompt len}
\end{figure}

\paragraph{The rank of the gradients} To examine Lemma~\ref{lemma: ranks}, we measure the rank of each layer gradient matrix. 
As depicted in \autoref{fig:FF1 ranks vs prompt len}, for every prompt with the length of $n$ tokens, the model's gradient matrices are almost always exactly rank $n$. The only exceptions are the last MLP layers, which have a rank of 1, as predicted in \autoref{sec:backlens}.  
Although unnoticeable from the figure, once in a few dozen examples, there is a drop of one or two in the rank of the gradients, indicating linear dependency in $x_i$ or $\delta_i$, see \autoref{LMs' Gradints' Ranks}. This is not a result of a repeated token, since the positional encoding would still lead to a different $x_i$.

\begin{figure}[t]
\centering
\includegraphics[width=\columnwidth]{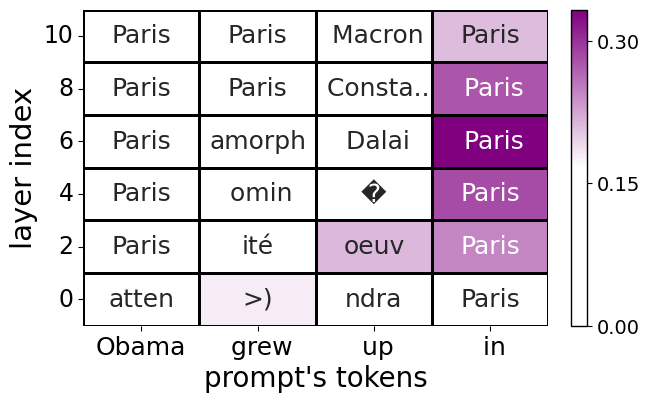}
\caption{The gradient of GPT2-small $FF_2$ when editing the model to answer ``Paris'' for the prompt ``Obama grew up in''. 
Each cell shows the Logit Lens projection of the gradient's VJP ($\delta_i$) for a token input and a layer.
Non-English characters are replaced with a question mark, and long tokens are truncated with ``..''. 
According to \autoref{Trap and Shift}, instead of showing the most probable token in each cell, we display the least probable one. The color indicates the norm of the VJP, with white cells indicating that almost no editing is done in practice.}
\label{fig:obama paris}
\end{figure}

\paragraph{Logit Lens of Gradients} 
Next, we present examples of our gradients' interpretation through LL in \autoref{fig:obama paris} and in \autoref{Additional Table Examples}.
In each cell of these plots, the LL projections of the chosen spanning set ($FF_1$'s $x_i$ and $FF_2$'s $\delta_i$) are presented for a specific layer and a token from the 
prompt that was used for the editing.

Prior studies that projected the forward pass examine the LL projections of hidden states, highlighting the gradual change in the projected tokens between layers \cite{Nostalgebraist2020, haviv2022understanding}. Similarly, \autoref{fig:obama paris} presents a gradual change in the backward pass' VJP.  
Across most layers, LL reveals that the gradients represent the embedding of ``Paris''. 
 Other projections have semantics that are related to ``Paris'' such as ``Macron'', the family name of the President of France. The norm of the VJP is indicted by color, and, in the top layers, the only meaningful updates are for the token ``Paris''. Some of the edits in the lower layers are harder to explain, similarly to the situation in those layers for the vanilla LL of the forward pass.

\paragraph{Impact of Different Segments of the Prompt}\label{Editing Mass}\label{What Tokens The Gradients Represent}

We observe that while all the prompt's tokens contribute to the gradient construction (\autoref{formula: our decomposition}), the majority of these contributions are done by VJPs, $\delta_i$, with a close-to-zero norm. 
Furthermore, upon examining the LL of every individual neuron from the gradient matrix (Appendix \ref{Why Decomposed Gradient Analysis Makes Sense}), we found that all the projected tokens are correlated with only 1-2 vectors we can identify from the spanning sets presented in \autoref{Applying Logic Lens to Gradient Matrices}.

\begin{figure}[t]
    \centering
    \includegraphics[width=\columnwidth]{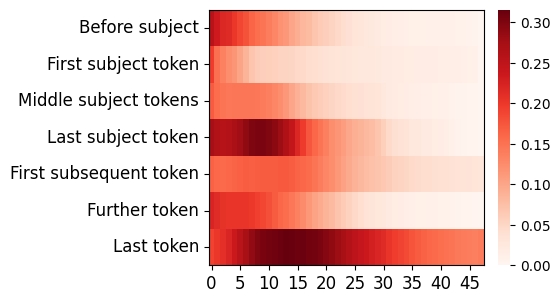}
    \caption{The norm of GPT2-xl's $FF_2$'s VJPs ($\delta_i$) as a function of the layers' index and segments of the edited prompts. 
    White color represents close-to-zero updates with almost no effect on the model's weights.}
    \label{fig:tracing1}
\end{figure}

To discern the relative importance of tokens and layers in the gradient reconstruction, we divide each prompt's tokens into segments and plot their $\delta_i$ mean norm. This experiment is done with GPT2-xl, due to its extensive use in prior work on interpretability research.

Results are depicted for $FF_2$ in \autoref{fig:tracing1}, see Appendix~\ref{Editing Mass of Every Layer} for $FF_1$. Evidently, predominant updates occur in two main areas: (1) by the subject's tokens in the initial layers, and (2) by the last prompt's token around the second quarter of the layers.
The majority of other tokens exhibit a norm close to zero throughout the layers, indicating that they have almost no effect on the updating. 
We hypothesize that the changes to the last subject token may involve editing the information transferred by the subject's token through attention, as demonstrated by \citet{geva-etal-2023-dissecting}.

A complementary view is provided by considering, for the LL rank of each VJP $\delta_i$  (labeled by the segment of token $i$ of the input) the rank of the target token.
\autoref{fig:target token in FF2} illustrates that the VJP of the last token from the edited prompts, $\delta_n$, consistently ranks the target token among the least probable ones. The VJPs of other tokens from the edited prompt, $\delta_i$, exhibit comparable behavior, generally ranking the target token as improbable.

\begin{figure}[t]
\includegraphics[width=1\linewidth]{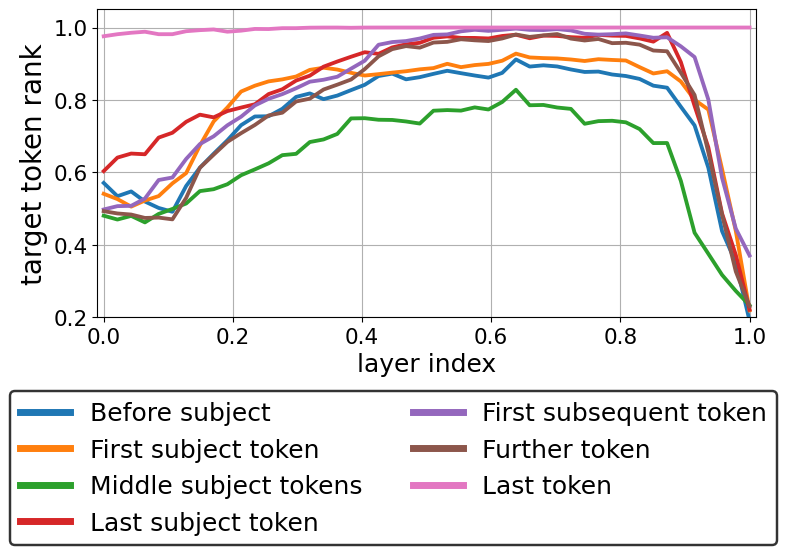}
\caption{
The Logit Lens rank of the target token for GPT2-xl $FF_2$'s VJPs, $\delta_i$.
Most gradients tend to rank the target token as one of the least probable tokens, with the last token consistently ranking it as such.
We found that the degradation of the first and last layers can be attributed to the proximity of certain $\delta_i$ norms to zero.
In the initial layers, we posit that Logit Lens is less effective, thereby resulting in lower readability for the earlier layers.}
\label{fig:target token in FF2}
\end{figure}

The result reveals that along the first and last layers, some of the $\delta_i$ show degradation in their ranking of the target token, which we attribute to their low norms as reflected by \autoref{fig:tracing1}. We demonstrate in \autoref{Normalize Logit Lens} that normalizing $\delta_i$ before LL magnifies the presence of the target token.
Specifically, the drop at the model's last layer is  due to the fact that apart from the last prompt's token, all the others have a zero vector $\delta_i$ at that layer (\autoref{The Rank of The Last Layer}).

Please note that the degradation of this rank in the first few layers might be related to the gap in LL interpretability for the earlier layers discussed in \autoref{Logit Lens and Projecting Methods}. In Appendix~\ref{The Ranks of $FF_1$ and the Models' Original Answer} we provide a similar analysis for $FF_1$'s gradients.

\section{Application: Editing Based on the ``Shift'' Mechanism}
\label{Editing Directly with Target Embedding}
Prior work \cite{mitchell2021fast, meng2022locating, meng2022mass} introduced editing methods that change only $FF_2$'s matrices.
In \autoref{Trap and Shift} we identify the ``shift'' mechanism of editing $FF_2$. In \autoref{What Tokens The Gradients Represent} we observe that the dominant components in constructing the gradients are derived from the outer product of the last token's input and VJP, $x_n^\top \cdot \delta_n$, and that $\delta_n$ contains the embedding of the target token.

We hypothesize that we can edit LMs' internal knowledge by updating only a single $FF_2$ matrix with a single forward pass and an approximation of the VJP, thereby eliminating the need for the backward pass.
Based on \autoref{observation: initial backward vjp}, the embedding of the editing target is annotated by $D^\top[t]$, where $D$ is the decoding matrix and $t$ is the index of the target token.
Our experimental method works as follows: (i) We choose an MLP layer we wish to edit, $\hat{FF_2}$ (predefined as a hyper-parameter according to \autoref{appendix: editing}). (ii) We run a single forward pass with the prompt whose output we want to edit. (iii) During the forward pass, we collect the last token input for the layer we want to edit, $x_n$. (iv) We collect the embedding of the target token $D^\top[t]$ and (v) update the MLP matrix by $\hat{FF_2}\leftarrow \hat{FF_2} + \eta\cdot x_n^\top \cdot D^\top[t]$, where $\eta$ is the learning rate.
We name this method ``forward pass shifting''.

We examined our method on 1000 samples from CounterFact (\autoref{tab: Manual grad main paper table}; the full results and additional implementation details are presented in \autoref{Manual Gradients Editing: appendix}), and found that for single editing our approach is on par with the state-of-the-art methods MEND \cite{mitchell2021fast}, ROME \cite{meng2022locating} and MEMIT \cite{meng2022mass}, in editing a given prompt, but it falls short in comparison to ROME in generalization (editing paraphrases) and specificity (see Appendix). 
However, our method has much lower runtime complexity and does not employ a multi-step (iterative) execution. 
Overall,  our results suggest we might be able to find ``shortcuts'' in the implementation of fine-tuning by injecting tokens directly into LMs' layers.

\begin{table}[t]
\begin{center}
\begin{sc}
\begin{tabular}{@{}l@{~}c@{~}c@{~}c@{}}
\toprule
Method & Eff$\uparrow$ & Par$\uparrow$ & n-gram$\uparrow$ \\
\midrule
Original model &  0.4 & 0.4 & 626.94 \\
Finetuning (MLP 0) & 96.4  & 7.46 &  618.81 \\
Finetuning (MLP 35) & 100.0 & 46.1 &  618.50 \\
\midrule
MEND   & 71.4  & 17.6 &  623.94 \\
ROME     & 99.4  &  71.9 & 622.78 \\
MEMIT     & 79.4  & 40.7 & 627.18 \\
\textbf{Forward pass shift} & 99.4 & 41.6  &  622.45 \\
\bottomrule
\end{tabular}
\end{sc}
\end{center}
\caption{GPT2-xl single editing results for CounterFact.
Efficacy (EFF) represents the editing success rate (accuracy). 
Paraphrase (PAR) denotes the accuracy of predicting the new target for phrases derived from the edited prompt.
N-gram measures generation fluency using weighted bi- and tri-gram entropies.}
\label{tab: Manual grad main paper table}
\end{table}

\section{Conclusions}
 Other LL-type interpretability contributions shed light on LMs through the forward pass. Here, we show that gradients can be projected into the vocabulary space and utilize the low-rank nature of the gradient matrices to explore the backward pass in an interpretable way. As we show, the gradients are best captured by a spanning set that contains either the input to each layer, or its VJP.  
These two components, which are accessible during the forward and backward passes, are used to store information in the MLP layers, using a mechanism we call ``imprint and shift''. 
We provided experimental results to substantiate the results of our analysis, including an editing method that only requires a single forward pass, but is on par with the SOTA knowledge editing methods.

\section{Limitations}
Our use of LL in projecting gradients has limitations when it comes to explaining the gradients of earlier layers. At this point, it remains unclear whether gradients operate in the same embedding space across all layers or if another transformation is required for projecting earlier layers.
This question is currently being explored for the forward pass (see \autoref{Logit Lens and Projecting Methods}), suggesting additional learned transformations to the first layers. 
Given the lack of a wide consensus on this additional transformation, we have opted to employ only the original LL projection in our analysis.
Furthermore, some recent contributions against LL argue that this method is more correlated with LMs' behaviors, rather than causally explaining them. Our work shows that at least in the later layers of LMs, token embeddings are directly placed into the weights of the LM, making LL projections well-justified.

Recently, alternative approaches have been proposed to explain LMs by intervening in the forward pass \cite{meng2022locating}. When combined with token projection methods, this approach holds promise in providing insights into the ``thinking’’ process of LMs \cite{ghandeharioun2024patchscope}.

Our work ignores the additional scaling that is introduced by optimizers other than Stochastic Gradient Descent, such as  Adam \cite{kingma2014adam}. 
While the backward pass's VJPs remain unaffected when such optimizers are employed, they do alter the rank and weights of each gradient matrix, due to the additional scaling.

Our approach to explaining how knowledge is stored in LMs is grounded in single editing with a constant embedding. While our approach elucidates how models store various information, fine-tuning is typically conducted on multiple prompts and involves multiple steps (iterations). Additionally, training a model from scratch includes the training of its embeddings. 

Our experimental approach to editing LMs with ``forward pass shift'' is presented as a case study rather than as a suggested alternative to existing methods. The results in \autoref{Editing Directly with Target Embedding}, ~\autoref{Manual Gradients Editing: appendix} might obfuscate ``editing’’ and ``output shifting’’, since only plotting the desired answers does not fully encapsulate the effect of the edit on similar prompts, which is a challenge faced by most editing benchmarks and datasets.

Our focus on the MLP layers excludes the attention layers. This decision is influenced by the growing consensus that MLPs are where LMs predominantly store information \cite{dai2022knowledge, meng2022locating}. We acknowledge the possibility that attention layers may also store information and that editing MLPs and attention simultaneously could have different effects on the model from those detailed in \autoref{Trap and Shift}. 

Our theoretical analysis disregards certain components of LMs, such as Dropouts, Layer Norms, positional embedding and bias vectors. 
We acknowledge that these components may have distinct effects on the interpretation of the backward pass, but without these simplifications the derivations are laden with additional terms.

Lastly, our work was conducted on Decoder LMs with sequential architecture. It is important to note that other types of LMs might exhibit different behaviors in terms of their gradients.

\smallskip
\section{Ethics and Impact Statement}
This paper presents work whose goal is to advance the field of Machine Learning. There are many potential societal consequences of our work, none which we feel must be specifically highlighted here.
However, future research could use the methods we developed to edit LMs. We hope such cases would be for developing better and safer models, rather than promoting harmful content.

\smallskip
\smallskip
\section*{Acknowledgements}
 This work was supported by the ISRAEL SCIENCE FOUNDATION (grant No.\ 448/20), an Open Philanthropy alignment grant, and an Azrieli Foundation Early Career Faculty Fellowship.

\bibliography{anthology_tmp, custom}

\begin{thebibliography}{53}
\expandafter\ifx\csname natexlab\endcsname\relax\def\natexlab#1{#1}\fi

\bibitem[{Belinkov and Glass(2019)}]{belinkov-glass-2019-analysis}
Yonatan Belinkov and James Glass. 2019.
\newblock \href {https://doi.org/10.1162/tacl_a_00254} {Analysis methods in neural language processing: A survey}.
\newblock \emph{Transactions of the Association for Computational Linguistics}, 7:49--72.

\bibitem[{Belrose et~al.(2023)Belrose, Furman, Smith, Halawi, Ostrovsky, McKinney, Biderman, and Steinhardt}]{belrose2023eliciting}
Nora Belrose, Zach Furman, Logan Smith, Danny Halawi, Igor Ostrovsky, Lev McKinney, Stella Biderman, and Jacob Steinhardt. 2023.
\newblock Eliciting latent predictions from transformers with the tuned lens.
\newblock \emph{arXiv preprint arXiv:2303.08112}.

\bibitem[{Bender et~al.(2021)Bender, Gebru, McMillan-Major, and Shmitchell}]{bender2021dangers}
Emily~M Bender, Timnit Gebru, Angelina McMillan-Major, and Shmargaret Shmitchell. 2021.
\newblock On the dangers of stochastic parrots: Can language models be too big?
\newblock In \emph{Proceedings of the 2021 ACM conference on fairness, accountability, and transparency}, pages 610--623.

\bibitem[{Bishop(2006)}]{bishop2006pattern}
Christopher Bishop. 2006.
\newblock Pattern recognition and machine learning.
\newblock \emph{Springer google schola}, 2:531--537.

\bibitem[{Bricken et~al.(2023)Bricken, Templeton, Batson, Chen, Jermyn, Conerly, Turner, Anil, Denison, Askell, Lasenby, Wu, Kravec, Schiefer, Maxwell, Joseph, Hatfield-Dodds, Tamkin, Nguyen, McLean, Burke, Hume, Carter, Henighan, and Olah}]{bricken2023monosemanticity}
Trenton Bricken, Adly Templeton, Joshua Batson, Brian Chen, Adam Jermyn, Tom Conerly, Nick Turner, Cem Anil, Carson Denison, Amanda Askell, Robert Lasenby, Yifan Wu, Shauna Kravec, Nicholas Schiefer, Tim Maxwell, Nicholas Joseph, Zac Hatfield-Dodds, Alex Tamkin, Karina Nguyen, Brayden McLean, Josiah~E Burke, Tristan Hume, Shan Carter, Tom Henighan, and Christopher Olah. 2023.
\newblock Towards monosemanticity: Decomposing language models with dictionary learning.
\newblock \emph{Transformer Circuits Thread}.
\newblock Https://transformer-circuits.pub/2023/monosemantic-features/index.html.

\bibitem[{Chefer et~al.(2022)Chefer, Schwartz, and Wolf}]{chefer2022optimizing}
Hila Chefer, Idan Schwartz, and Lior Wolf. 2022.
\newblock Optimizing relevance maps of vision transformers improves robustness.
\newblock \emph{Advances in Neural Information Processing Systems}, 35:33618--33632.

\bibitem[{Clark(2017)}]{clark2017computing}
Kevin Clark. 2017.
\newblock \href {https://web.stanford.edu/class/cs224n/readings/gradient-notes.pdf} {Computing neural network gradients}.

\bibitem[{Cunningham et~al.(2023)Cunningham, Ewart, Riggs, Huben, and Sharkey}]{cunningham2023sparse}
Hoagy Cunningham, Aidan Ewart, Logan Riggs, Robert Huben, and Lee Sharkey. 2023.
\newblock Sparse autoencoders find highly interpretable features in language models.
\newblock \emph{arXiv preprint arXiv:2309.08600}.

\bibitem[{Dai et~al.(2022)Dai, Dong, Hao, Sui, Chang, and Wei}]{dai2022knowledge}
Damai Dai, Li~Dong, Yaru Hao, Zhifang Sui, Baobao Chang, and Furu Wei. 2022.
\newblock Knowledge neurons in pretrained transformers.
\newblock In \emph{Proceedings of the 60th Annual Meeting of the Association for Computational Linguistics (Volume 1: Long Papers)}, pages 8493--8502.

\bibitem[{Dar et~al.(2022)Dar, Geva, Gupta, and Berant}]{Dar2022AnalyzingTI}
Guy Dar, Mor Geva, Ankit Gupta, and Jonathan Berant. 2022.
\newblock Analyzing transformers in embedding space.
\newblock \emph{arXiv preprint arXiv:2209.02535}.

\bibitem[{Din et~al.(2023)Din, Karidi, Choshen, and Geva}]{din2023jump}
Alexander~Yom Din, Taelin Karidi, Leshem Choshen, and Mor Geva. 2023.
\newblock Jump to conclusions: Short-cutting transformers with linear transformations.
\newblock \emph{arXiv preprint arXiv:2303.09435}.

\bibitem[{Dwivedi et~al.(2023)Dwivedi, Kshetri, Hughes, Slade, Jeyaraj, Kar, Baabdullah, Koohang, Raghavan, Ahuja et~al.}]{dwivedi2023so}
Yogesh~K Dwivedi, Nir Kshetri, Laurie Hughes, Emma~Louise Slade, Anand Jeyaraj, Arpan~Kumar Kar, Abdullah~M Baabdullah, Alex Koohang, Vishnupriya Raghavan, Manju Ahuja, et~al. 2023.
\newblock “so what if chatgpt wrote it?” multidisciplinary perspectives on opportunities, challenges and implications of generative conversational ai for research, practice and policy.
\newblock \emph{International Journal of Information Management}, 71:102642.

\bibitem[{Elhage et~al.(2021)Elhage, Nanda, Olsson, Henighan, Joseph, Mann, Askell, Bai, Chen, Conerly et~al.}]{elhage2021mathematical}
N~Elhage, N~Nanda, C~Olsson, T~Henighan, N~Joseph, B~Mann, A~Askell, Y~Bai, A~Chen, T~Conerly, et~al. 2021.
\newblock \href {https://transformer-circuits.pub/2021/framework/index.html} {A mathematical framework for transformer circuits}.

\bibitem[{Elhage et~al.(2022)Elhage, Hume, Olsson, Schiefer, Henighan, Kravec, Hatfield-Dodds, Lasenby, Drain, Chen, Grosse, McCandlish, Kaplan, Amodei, Wattenberg, and Olah}]{elhage2022toy}
Nelson Elhage, Tristan Hume, Catherine Olsson, Nicholas Schiefer, Tom Henighan, Shauna Kravec, Zac Hatfield-Dodds, Robert Lasenby, Dawn Drain, Carol Chen, Roger Grosse, Sam McCandlish, Jared Kaplan, Dario Amodei, Martin Wattenberg, and Christopher Olah. 2022.
\newblock \href {http://arxiv.org/abs/2209.10652} {Toy models of superposition}.

\bibitem[{Geva et~al.(2023)Geva, Bastings, Filippova, and Globerson}]{geva-etal-2023-dissecting}
Mor Geva, Jasmijn Bastings, Katja Filippova, and Amir Globerson. 2023.
\newblock \href {https://doi.org/10.18653/v1/2023.emnlp-main.751} {Dissecting recall of factual associations in auto-regressive language models}.
\newblock In \emph{Proceedings of the 2023 Conference on Empirical Methods in Natural Language Processing}, pages 12216--12235, Singapore. Association for Computational Linguistics.

\bibitem[{Geva et~al.(2022{\natexlab{a}})Geva, Caciularu, Dar, Roit, Sadde, Shlain, Tamir, and Goldberg}]{geva-etal-2022-lm}
Mor Geva, Avi Caciularu, Guy Dar, Paul Roit, Shoval Sadde, Micah Shlain, Bar Tamir, and Yoav Goldberg. 2022{\natexlab{a}}.
\newblock \href {https://doi.org/10.18653/v1/2022.emnlp-demos.2} {{LM}-debugger: An interactive tool for inspection and intervention in transformer-based language models}.
\newblock In \emph{Proceedings of the 2022 Conference on Empirical Methods in Natural Language Processing: System Demonstrations}, pages 12--21, Abu Dhabi, UAE. Association for Computational Linguistics.

\bibitem[{Geva et~al.(2022{\natexlab{b}})Geva, Caciularu, Wang, and Goldberg}]{geva-etal-2022-transformer}
Mor Geva, Avi Caciularu, Kevin Wang, and Yoav Goldberg. 2022{\natexlab{b}}.
\newblock \href {https://doi.org/10.18653/v1/2022.emnlp-main.3} {Transformer feed-forward layers build predictions by promoting concepts in the vocabulary space}.
\newblock In \emph{Proceedings of the 2022 Conference on Empirical Methods in Natural Language Processing}, pages 30--45, Abu Dhabi, United Arab Emirates. Association for Computational Linguistics.

\bibitem[{Geva et~al.(2021)Geva, Schuster, Berant, and Levy}]{geva2021transformer}
Mor Geva, Roei Schuster, Jonathan Berant, and Omer Levy. 2021.
\newblock Transformer feed-forward layers are key-value memories.
\newblock In \emph{Proceedings of the 2021 Conference on Empirical Methods in Natural Language Processing}, pages 5484--5495.

\bibitem[{Ghandeharioun et~al.(2024)Ghandeharioun, Caciularu, Pearce, Dixon, and Geva}]{ghandeharioun2024patchscope}
Asma Ghandeharioun, Avi Caciularu, Adam Pearce, Lucas Dixon, and Mor Geva. 2024.
\newblock Patchscope: A unifying framework for inspecting hidden representations of language models.
\newblock \emph{arXiv preprint arXiv:2401.06102}.

\bibitem[{Gueta et~al.(2023)Gueta, Venezian, Raffel, Slonim, Katz, and Choshen}]{gueta2023knowledge}
Almog Gueta, Elad Venezian, Colin Raffel, Noam Slonim, Yoav Katz, and Leshem Choshen. 2023.
\newblock \href {https://doi.org/10.18653/v1/2023.findings-emnlp.95} {Knowledge is a region in weight space for fine-tuned language models}.
\newblock In \emph{Findings of the Association for Computational Linguistics: EMNLP 2023}, pages 1350--1370, Singapore. Association for Computational Linguistics.

\bibitem[{Haviv et~al.(2023)Haviv, Cohen, Gidron, Schuster, Goldberg, and Geva}]{haviv2022understanding}
Adi Haviv, Ido Cohen, Jacob Gidron, Roei Schuster, Yoav Goldberg, and Mor Geva. 2023.
\newblock \href {https://aclanthology.org/2023.eacl-main.19} {Understanding transformer memorization recall through idioms}.
\newblock In \emph{Proceedings of the 17th Conference of the European Chapter of the Association for Computational Linguistics, {EACL} 2023, Dubrovnik, Croatia, May 2-6, 2023}, pages 248--264. Association for Computational Linguistics.

\bibitem[{Hu et~al.(2022)Hu, Shen, Wallis, Allen-Zhu, Li, Wang, Wang, and Chen}]{hu2022lora}
Edward~J Hu, Yelong Shen, Phillip Wallis, Zeyuan Allen-Zhu, Yuanzhi Li, Shean Wang, Lu~Wang, and Weizhu Chen. 2022.
\newblock \href {https://openreview.net/forum?id=nZeVKeeFYf9} {Lo{RA}: Low-rank adaptation of large language models}.
\newblock In \emph{International Conference on Learning Representations}.

\bibitem[{Ilharco et~al.(2022)Ilharco, Ribeiro, Wortsman, Schmidt, Hajishirzi, and Farhadi}]{ilharco2022editing}
Gabriel Ilharco, Marco~Tulio Ribeiro, Mitchell Wortsman, Ludwig Schmidt, Hannaneh Hajishirzi, and Ali Farhadi. 2022.
\newblock Editing models with task arithmetic.
\newblock In \emph{The Eleventh International Conference on Learning Representations}.

\bibitem[{Indolia et~al.(2018)Indolia, Goswami, and Asopa}]{indolia2018conceptual}
Sakshi Indolia, Anil~Kumar Goswami, and Pooja Asopa. 2018.
\newblock Conceptual understanding of convolutional neural network-a deep learning approach.
\newblock \emph{Procedia computer science}, 132:679--688.

\bibitem[{Katz and Belinkov(2023)}]{katz2023visit}
Shahar Katz and Yonatan Belinkov. 2023.
\newblock \href {https://doi.org/10.18653/v1/2023.findings-emnlp.939} {{VISIT}: Visualizing and interpreting the semantic information flow of transformers}.
\newblock In \emph{Findings of the Association for Computational Linguistics: EMNLP 2023}, pages 14094--14113, Singapore. Association for Computational Linguistics.

\bibitem[{Kingma(2014)}]{kingma2014adam}
DP~Kingma. 2014.
\newblock Adam: a method for stochastic optimization.
\newblock In \emph{Int Conf Learn Represent}.

\bibitem[{Le~Cun(1988)}]{le1988theoretical}
Y~Le~Cun. 1988.
\newblock A theoretical framework for backpropagation.
\newblock In \emph{Proceedings of the 1988 Connectionist Models Summer School}.

\bibitem[{Meng et~al.(2022)Meng, Bau, Andonian, and Belinkov}]{meng2022locating}
Kevin Meng, David Bau, Alex Andonian, and Yonatan Belinkov. 2022.
\newblock Locating and editing factual associations in {GPT}.
\newblock \emph{Advances in Neural Information Processing Systems}, 36.

\bibitem[{Meng et~al.(2023)Meng, Sharma, Andonian, Belinkov, and Bau}]{meng2022mass}
Kevin Meng, Arnab~Sen Sharma, Alex Andonian, Yonatan Belinkov, and David Bau. 2023.
\newblock Mass-editing memory in a transformer.
\newblock \emph{International Conference on Learning Representations}.

\bibitem[{Merity et~al.(2016)Merity, Xiong, Bradbury, and Socher}]{merity2016pointer}
Stephen Merity, Caiming Xiong, James Bradbury, and Richard Socher. 2016.
\newblock Pointer sentinel mixture models.
\newblock In \emph{International Conference on Learning Representations}.

\bibitem[{Miglani et~al.(2023)Miglani, Yang, Markosyan, Garcia-Olano, and Kokhlikyan}]{miglani2023using}
Vivek Miglani, Aobo Yang, Aram Markosyan, Diego Garcia-Olano, and Narine Kokhlikyan. 2023.
\newblock Using captum to explain generative language models.
\newblock In \emph{Proceedings of the 3rd Workshop for Natural Language Processing Open Source Software (NLP-OSS 2023)}, pages 165--173.

\bibitem[{Millidge and Black(2022)}]{millidge2022singular}
Beren Millidge and Sid Black. 2022.
\newblock \href {https://www.lesswrong.com/posts/mkbGjzxD8d8XqKHzA/the-singular-value-decompositions-of-transformer-weight} {The singular value decompositions of transformer weight matrices are highly interpretable}.

\bibitem[{Mitchell et~al.(2021)Mitchell, Lin, Bosselut, Finn, and Manning}]{mitchell2021fast}
Eric Mitchell, Charles Lin, Antoine Bosselut, Chelsea Finn, and Christopher~D Manning. 2021.
\newblock Fast model editing at scale.
\newblock In \emph{International Conference on Learning Representations}.

\bibitem[{nostalgebraist(2020)}]{Nostalgebraist2020}
nostalgebraist. 2020.
\newblock \href {https://www.lesswrong.com/posts/AcKRB8wDpdaN6v6ru/interpreting-gpt-the-logit-lens} {interpreting gpt: the logit lens}.

\bibitem[{Olah et~al.(2020)Olah, Cammarata, Voss, Schubert, and Goh}]{olah2020naturally}
Chris Olah, Nick Cammarata, Chelsea Voss, Ludwig Schubert, and Gabriel Goh. 2020.
\newblock Naturally occurring equivariance in neural networks.
\newblock \emph{Distill}, 5(12):e00024--004.

\bibitem[{Radford et~al.(2018)Radford, Narasimhan, Salimans, and Sutskever}]{radfordimproving}
Alec Radford, Karthik Narasimhan, Tim Salimans, and Ilya Sutskever. 2018.
\newblock Improving language understanding by generative pre-training.

\bibitem[{Radford et~al.(2019)Radford, Wu, Child, Luan, Amodei, and Sutskever}]{Radford2019LanguageMA}
Alec Radford, Jeff Wu, Rewon Child, David Luan, Dario Amodei, and Ilya Sutskever. 2019.
\newblock Language models are unsupervised multitask learners.
\newblock OpenAI blog.

\bibitem[{Ram et~al.(2023)Ram, Bezalel, Zicher, Belinkov, Berant, and Globerson}]{ram-etal-2023-token}
Ori Ram, Liat Bezalel, Adi Zicher, Yonatan Belinkov, Jonathan Berant, and Amir Globerson. 2023.
\newblock \href {https://doi.org/10.18653/v1/2023.acl-long.140} {What are you token about? dense retrieval as distributions over the vocabulary}.
\newblock In \emph{Proceedings of the 61st Annual Meeting of the Association for Computational Linguistics (Volume 1: Long Papers)}, pages 2481--2498, Toronto, Canada. Association for Computational Linguistics.

\bibitem[{Rumelhart et~al.(1986)Rumelhart, Hinton, and Williams}]{rumelhart1986learning}
David~E Rumelhart, Geoffrey~E Hinton, and Ronald~J Williams. 1986.
\newblock Learning representations by back-propagating errors.
\newblock \emph{nature}, 323(6088):533--536.

\bibitem[{Samek et~al.(2017)Samek, Wiegand, and M{\"{u}}ller}]{Wojciech2017}
Wojciech Samek, Thomas Wiegand, and Klaus{-}Robert M{\"{u}}ller. 2017.
\newblock \href {http://arxiv.org/abs/1708.08296} {Explainable artificial intelligence: Understanding, visualizing and interpreting deep learning models}.
\newblock \emph{CoRR}, abs/1708.08296.

\bibitem[{Sanyal and Ren(2021)}]{sanyal2021discretized}
Soumya Sanyal and Xiang Ren. 2021.
\newblock Discretized integrated gradients for explaining language models.
\newblock In \emph{Proceedings of the 2021 Conference on Empirical Methods in Natural Language Processing}, pages 10285--10299.

\bibitem[{Sarti et~al.(2023)Sarti, Feldhus, Sickert, and van~der Wal}]{sarti-etal-2023-inseq}
Gabriele Sarti, Nils Feldhus, Ludwig Sickert, and Oskar van~der Wal. 2023.
\newblock \href {https://doi.org/10.18653/v1/2023.acl-demo.40} {Inseq: An interpretability toolkit for sequence generation models}.
\newblock In \emph{Proceedings of the 61st Annual Meeting of the Association for Computational Linguistics (Volume 3: System Demonstrations)}, pages 421--435, Toronto, Canada. Association for Computational Linguistics.

\bibitem[{Simhi and Markovitch(2023)}]{simhi2023}
Adi Simhi and Shaul Markovitch. 2023.
\newblock \href {https://doi.org/10.18653/v1/2023.emnlp-main.106} {Interpreting embedding spaces by conceptualization}.
\newblock In \emph{Proceedings of the 2023 Conference on Empirical Methods in Natural Language Processing}, pages 1704--1719, Singapore. Association for Computational Linguistics.

\bibitem[{Simonyan et~al.(2014)Simonyan, Vedaldi, and Zisserman}]{simonyan2014deep}
K~Simonyan, A~Vedaldi, and A~Zisserman. 2014.
\newblock Deep inside convolutional networks: visualising image classification models and saliency maps.
\newblock In \emph{Proceedings of the International Conference on Learning Representations (ICLR)}. ICLR.

\bibitem[{Srivastava et~al.(2023)Srivastava, Rastogi, Rao, Shoeb, Abid, Fisch, Brown, Santoro, Gupta, Garriga-Alonso et~al.}]{srivastava2023beyond}
Aarohi Srivastava, Abhinav Rastogi, Abhishek Rao, Abu Awal~Md Shoeb, Abubakar Abid, Adam Fisch, Adam~R Brown, Adam Santoro, Aditya Gupta, Adri{\`a} Garriga-Alonso, et~al. 2023.
\newblock Beyond the imitation game: Quantifying and extrapolating the capabilities of language models.
\newblock \emph{Transactions on Machine Learning Research}.

\bibitem[{Tamkin et~al.(2023)Tamkin, Taufeeque, and Goodman}]{tamkin2023codebook}
Alex Tamkin, Mohammad Taufeeque, and Noah~D Goodman. 2023.
\newblock Codebook features: Sparse and discrete interpretability for neural networks.
\newblock \emph{arXiv preprint arXiv:2310.17230}.

\bibitem[{Tian et~al.(2023)Tian, Wang, Chen, and Du}]{tian2023scan}
Yuandong Tian, Yiping Wang, Beidi Chen, and Simon Du. 2023.
\newblock Scan and snap: Understanding training dynamics and token composition in 1-layer transformer.
\newblock \emph{arXiv preprint arXiv:2305.16380}.

\bibitem[{Tigges et~al.(2023)Tigges, Hollinsworth, Geiger, and Nanda}]{tigges2023linear}
Curt Tigges, Oskar~John Hollinsworth, Atticus Geiger, and Neel Nanda. 2023.
\newblock Linear representations of sentiment in large language models.
\newblock \emph{arXiv preprint arXiv:2310.15154}.

\bibitem[{Todd et~al.(2023)Todd, Li, Sharma, Mueller, Wallace, and Bau}]{todd2023function}
Eric Todd, Millicent~L Li, Arnab~Sen Sharma, Aaron Mueller, Byron~C Wallace, and David Bau. 2023.
\newblock Function vectors in large language models.
\newblock \emph{arXiv preprint arXiv:2310.15213}.

\bibitem[{Touvron et~al.(2023)Touvron, Martin, Stone, Albert, Almahairi, Babaei, Bashlykov, Batra, Bhargava, Bhosale et~al.}]{touvron2023llama}
Hugo Touvron, Louis Martin, Kevin Stone, Peter Albert, Amjad Almahairi, Yasmine Babaei, Nikolay Bashlykov, Soumya Batra, Prajjwal Bhargava, Shruti Bhosale, et~al. 2023.
\newblock Llama 2: Open foundation and fine-tuned chat models.
\newblock \emph{arXiv preprint arXiv:2307.09288}.

\bibitem[{Vaswani et~al.(2017)Vaswani, Shazeer, Parmar, Uszkoreit, Jones, Gomez, Kaiser, and Polosukhin}]{vaswani2017attention}
Ashish Vaswani, Noam Shazeer, Niki Parmar, Jakob Uszkoreit, Llion Jones, Aidan~N Gomez, {\L}ukasz Kaiser, and Illia Polosukhin. 2017.
\newblock Attention is all you need.
\newblock \emph{Advances in neural information processing systems}, 30.

\bibitem[{Wang et~al.(2023)Wang, Li, Dai, Chen, Zhou, Meng, Zhou, and Sun}]{wang-etal-2023-label}
Lean Wang, Lei Li, Damai Dai, Deli Chen, Hao Zhou, Fandong Meng, Jie Zhou, and Xu~Sun. 2023.
\newblock \href {https://doi.org/10.18653/v1/2023.emnlp-main.609} {Label words are anchors: An information flow perspective for understanding in-context learning}.
\newblock In \emph{Proceedings of the 2023 Conference on Empirical Methods in Natural Language Processing}, pages 9840--9855, Singapore. Association for Computational Linguistics.

\bibitem[{Zhang and Zhu(2018)}]{zhang2018visual}
Quan-shi Zhang and Song-Chun Zhu. 2018.
\newblock Visual interpretability for deep learning: a survey.
\newblock \emph{Frontiers of Information Technology \& Electronic Engineering}, 19(1):27--39.

\end{thebibliography}

\appendix

\clearpage

\section{The Rank of The Last Layer} \label{The Rank of The Last Layer}
\label{app:B}
In \autoref{LMs' Gradints' Ranks}, we delve into the observation that each gradient matrix has a rank equal to the length of the edited prompt (annotated by $n$), except for the last layer's ones. In this section, we explain why the last layer's MLP matrices are always rank-1.

The backward pass, applied to the final loss score ($L$, \autoref{Backpropagation}), generates a computational graph that is reversed in direction compared to the forward pass \autoref{Backpropagation}. It begins with the loss score and the matrices of the last layer, proceeding in reverse order until reaching the matrices of the first layer. This computational graph encapsulates every hidden state and intermediate result that contributed to the final prediction, which is the output of the last layer for the last token in the prompt.

One might initially assume that, since the last prediction was formed by the input of the last token, only its hidden states would be involved in this computational graph. However, due to the attention mechanism, hidden states from previous forward passes can be recalled and utilized in subsequent forward passes, contributing to all the tokens that follow them in the prompt.
The last hidden state to be recalled using the attention modules is called at the model's last layer’s attention module, which precedes an MLP module in sequential architectures, such as GPT2 and Llama-7B.
Hence, in every layer, MLP inputs in the reverse computational graph comprise all individual intermediate inputs $x_i$ from the forward pass of each token in the prompt. However, at the last layer, the only input included is the one belonging to the last prompt token $x_n$. For this reason, also only the VJP of the last token, $\delta_n$, is included in the reconstruction of the gradients of the last MLP layer, while the $\delta_i$ for all the other tokens from the prompt are not included (or more correctly, they are equal to the zero vectors).

When constructing the gradients using $x_i$ and $\delta_i$, the rank of each layer is equal up to the number of $x_i, \delta_i$ involved in the computational graph (assuming linear independence \autoref{LMs' Gradints' Ranks}). This implies that all layer matrices are formed by $\frac{\partial L}{\partial W}=\sum_{i=1}^n x_i^\top \cdot \delta_i$ except for the last layer, which is constructed with $\frac{\partial L}{\partial W}=x_n^\top \cdot \delta_n$, which is rank-1.

In our study, especially in our figures and tables, we decided to include all the vectors of the last layer, including those from tokens which are not the prompt's last, which are thus equal to zero vectors. 
This approach is also the reason for the observed changes in the behavior of the gradients in some figures. 
For example, in \autoref{fig:target token in FF2} we can see that all the graphs (except for the last token's) converge to the same value at the last layer. The reason for this is that they are all equal to the zero vector. 
In \autoref{fig:table_medium_messi} we see the LL projection of the VJPs from the model's last layer, which are equal to projecting the zero vector.

\section{Why Decomposed Gradient Analysis Makes Sense}
\label{Why Decomposed Gradient Analysis Makes Sense}
In \autoref{Logit Lens and Projecting Methods}, \autoref{Applying Logic Lens to Gradient Matrices} we establish our interpretation of gradients via spanning sets. This approach is based on the understanding that each neuron in the gradient matrix is formed by the linear combination of $x_i$ (the forward pass's intermediate inputs) or $\delta_i$ (VJPs, the backward pass's hidden states).
In this section, we aim to illustrate, through a singular example, why analyzing a gradient matrix through its spanning set is more informative and simpler compared to attempting to analyze the full gradient matrix.

We use GPT2-medium (24 layers and 330M parameters) for our examination. We examine the MLP gradients using the prompt ``Lionel Messi plays for’’, to which the model responds with ``Barcelona’’. We edit the model with a single backward pass to respond ``Paris’’. In the case of this model, each MLP matrix comprises 4096 neurons. Consequently, to apply the Logit Lens (LL) projection to a particular gradient matrix, the process needs to be applied 4096 times.

\begin{table*}[h!]
\vskip 0.15in
\begin{center}
\begin{sc}
\begin{tabular}{cccc}
\toprule
Group & norm & LL top & LL bottom \\
\midrule
\multirow{3}{4em}{Biggest by norm} & 1.212 & \verb|Paris|, \verb| Paris|, \verb| Marse| & \verb|ufact|, \verb| Logged|, \verb|otomy| \\
& 0.352 & \verb|Paris|, \verb| Paris|, \verb| Marse| & \verb|ufact|, \verb|Spanish|, \verb| Gerr| \\
& 0.297 & \verb| Paris|, \verb|Paris|, \verb| Copenhagen| & \verb| ceremonial|, \verb|cade|, \verb|uana| \\
\hline
\multirow{3}{4em}{Medium by norm} & 0.033 & \verb|ufact|, \verb| Gerr|, \verb| sheriff| & \verb|Paris|, \verb| Paris|, \verb|ienne| \\
& 0.033 & \verb| Logged|, \verb| turtle|, \verb| ceremon| & \verb|Paris|, \verb| Paris|, \verb|France| \\
& 0.033 & \verb|ufact|, \verb| sec|, \verb| recess| & \verb|Paris|, \verb| Paris|, \verb|qus| \\
\hline
\multirow{3}{4em}{Smallest by norm} & 0.001 & \verb|,|, \verb| the|, \verb| and| & \verb|VIDIA|, \verb|advertisement|, \verb|Dialogue| \\
& 0.001 & \verb|,|, \verb| the|, \verb| and| & \verb|VIDIA|, \verb|advertisement|, \verb|Magikarp| \\
& 0.001 & \verb|,|, \verb| the|, \verb| and| & \verb|VIDIA|, \verb|advertisement|, \verb|Companies| \\
\hline
\multirow{2}{4em}{zero vector} & 0 & \verb|,|, \verb| the|, \verb| and| & \verb|VIDIA|, \verb|advertisement|, \verb|Companies| \\
& & & \\
\bottomrule
\end{tabular}
\end{sc}
\caption{Sample of gradient's neurons projection via Logit Lens (LL) from GPT2-medium, $FF_2$ matrix, layer 14.
\textbf{LL TOP} stands for the most probable tokens via LL, while \textbf{BOTTOM} are the most improbable ones.
In this example, we edit the prompt ``Lionel Messi plays for'' with the editing target ``Paris''. 
In the projected tokens we notice the predominance of ``Paris'', and also that gradient's neurons with a relatively low norm project the same tokens as the zero-vector.
}
\label{tab:messi FF2 samples}
\end{center}
\vskip -0.1in
\end{table*}

\begin{table*}[h!]
\vskip 0.15in
\begin{center}
\begin{sc}
\begin{tabular}{cccc}
\toprule
prompt token & norm & LL top & LL bottom \\
\midrule
\verb|L| & 0.029 & \verb| Rutherford|, \verb| Apost|, \verb| PROG| & \verb|Paris|, \verb| Paris|, \verb|ienne| \\
\verb|ion| & 0.035 & \verb|ramid|, \verb|ngth|, \verb| livest| & \verb|France|, \verb|ée|, \verb|É| \\
\verb|el| & 0.067 & \verb| unlaw|, \verb|owship|, \verb|arantine| & \verb| Libyan|, \verb| Libya|, \verb|France| \\
\verb| Messi| & 0.165 & \verb|ğ|, \verb| relic|, \verb| ejected| & \verb|vu|, \verb|igmat|, \verb|tain| \\
\verb| plays| & 0.026 & \verb| surv|, \verb| POV|, \verb| NTS| & \verb|Paris|, \verb| hotels|, \verb|Merit| \\
\verb| for| & 0.165 & \verb| ceremonial|, \verb|ado|, \verb|| & \verb|Paris|, \verb| Paris|, \verb| Copenhagen| \\
\bottomrule
\end{tabular}
\end{sc}
\caption{The Logit Lens of the VJPs ($\delta_i$) of GPT2-medium, $FF_2$ matrix, layer 14. Notice the dominance of ``Paris'' (the editing target) in the projected vocabulary and the norm ratio of the vectors.}       
\label{tab:messi FF2 reprasentative vectors}
\end{center}
\vskip -0.1in
\end{table*}

\paragraph{We start by analyzing $FF_2$ from layer 14}. In \autoref{tab:messi FF2 samples} we present samples of gradient neurons' projections by LL. 
In \autoref{The gradients of $FF_2$}, we elaborate on how each neuron is formed by multiplying an interpretable vector by a coefficient ($\delta_i$ and $x_i$), which in turn dictates its norm. 
We group these neurons based on their norms, unveiling shared projections among neurons within the same norm group. To underscore the proximity of certain neurons to the zero vector, we also include the projection of the zero vector.
Expanding \autoref{tab:messi FF2 samples} to every individual neuron in the matrix is impractical, given the challenge of reading a table with 4096 rows. Instead, we use plots \autoref{fig:messi representative vs singles}: the initial plot presents the LL intersection, measuring the extent of overlap between the top 100 most probable  tokens from two vectors, while the second shows the cosine similarity between the vectors.

We repeat the process after sorting the gradient neurons according to their norms \autoref{fig:messi representative vs singles sorted}. 
The gradients with the higher norms, 
are almost identical to the last VJP ($\delta_n$), with alignment extending up to the sign of the vectors.
In \autoref{tab:messi FF2 reprasentative vectors} LL reveals that these VJPs project ``Paris''. 
Gradients with low norms may appear correlated with parts of the spanning set's vectors ($FF_2$'s $\delta_i$), yet they are more correlated to the zero vector, emphasizing that these neurons do not update the model's weights (induce minimal change).

In addressing color shifting, we see around index 500 from the right that this is where the activations change sign from positive to negative. The negative learning rate causes the positive activation to add ``Paris’’ into those neurons, while the negative activation reduces ``Paris’’. In both cases, the process causes the model to add ``paris'' in the same direction (\autoref{Trap and Shift}). 

\begin{figure*}[h!]
\centering
\subcaptionbox{Logit Lens intersection\label{Logit Lens intersection - explain}}{\includegraphics[width=1\linewidth]{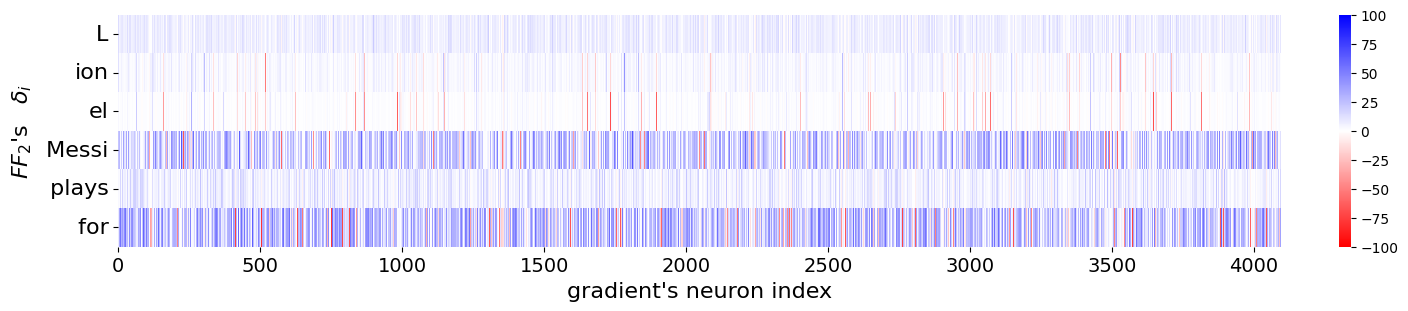}}
\subcaptionbox{Cosine Similarity}{\includegraphics[width=1\linewidth]{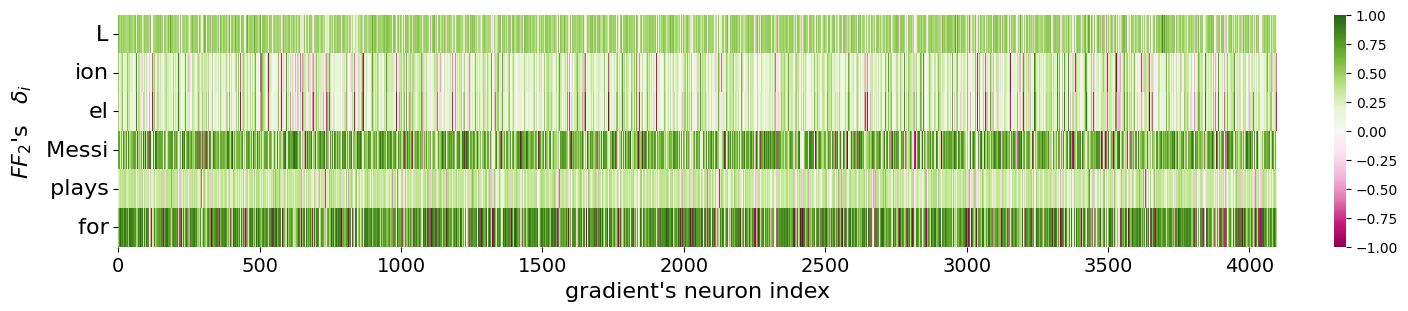}}
\caption{VJPs ($\delta_i$, the spanning set of $FF_2$) and single gradient's neurons comparison for layer 14's $FF_2$.
In \autoref{Logit Lens intersection - explain} a positive score is the amount of shared tokens between the top 100 most probable tokens after projecting each vector.
A negative score is presented if multiplying one of the vectors by $-1$ produces a higher amount of shared tokens (than the score presented with a negative sign).
The reason we apply this $-1$ multiplication is to address cases where two vectors have high correlation up to their sign (similarly to two vectors that overlap each other after one of them is multiplied by $-1$, making their cosine similarity negative).
}
\label{fig:messi representative vs singles}
\end{figure*}

\begin{figure*}[h!]
\centering
\subcaptionbox{Logit Lens intersection}{\includegraphics[width=1\linewidth]{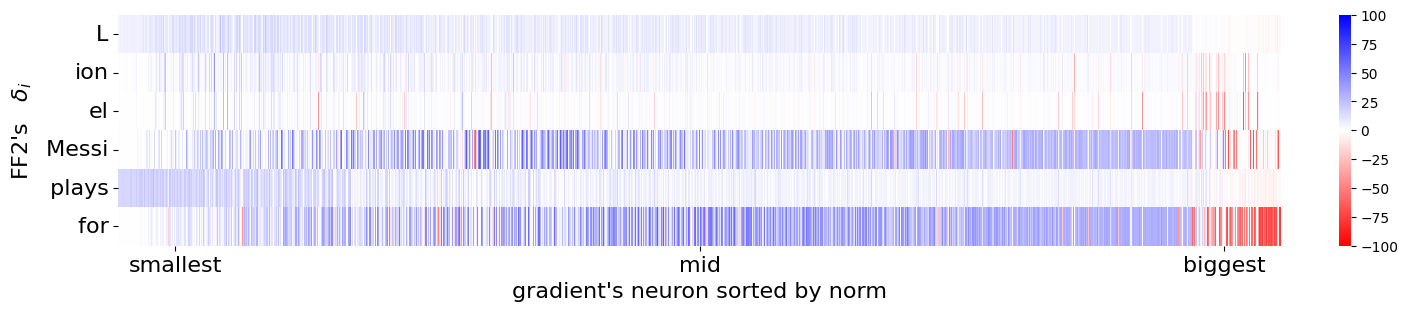}}
\subcaptionbox{Cosine Similarity}{\includegraphics[width=1\linewidth]{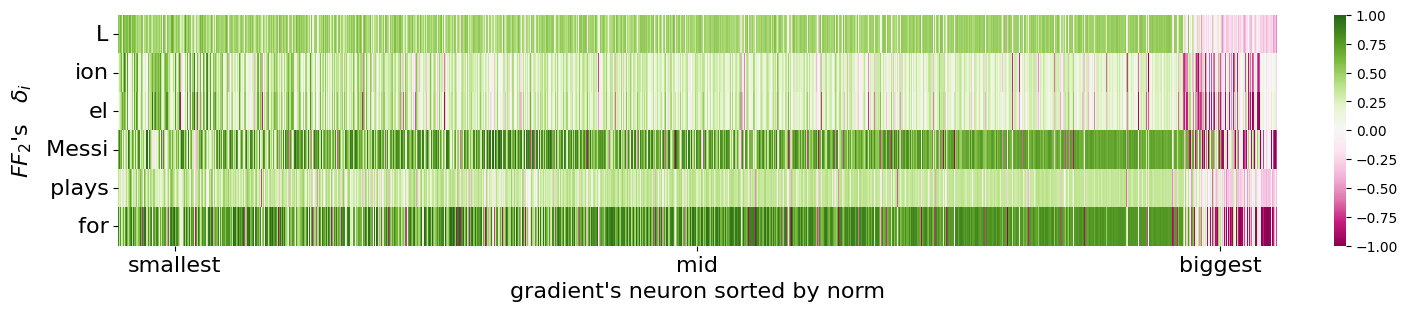}}
\caption{VJPs ($\delta_i$) and single gradient's neurons (sorted by norm) comparison for layer 14's $FF_2$. 
It is noteworthy that high-norm neurons, corresponding to those activated with positive activations during the forward pass, inject the VJP of ``for'' with a flipped sign. Medium-sized norm neurons also exhibit correlation with the representative vector of ``for''. Smallest by-norm neurons show minimal correlation; we refer to their proximity to the zero vector.}
\label{fig:messi representative vs singles sorted}
\end{figure*}


\paragraph{We repeat this type of analysis with $FF_1$}. We remind that our interpretation for this layer’s spanning set is its inputs $x_i$ \autoref{Applying Logic Lens to Gradient Matrices}.
Again, we see the alignment between individually analyzing the neurons of the gradients and the spanning set \autoref{tab:messi FF1 samples}, \autoref{tab:messi FF1 reprasentative vectors}, \autoref{fig:messi representative vs singles sorted FF1}. 


\begin{table*}[h!]
\centering
\vskip 0.15in
\begin{center}
\begin{sc}
\begin{tabular}{cccc}
\toprule
Group & norm & LL top & LL bottom \\
\midrule
\multirow{3}{4em}{Biggest by norm} & 0.438 & \verb| Cruz|, \verb|ization|, \verb|ize| & \verb| mathemat|, \verb| trave|, \verb| nodd| \\
& 0.422 & \verb| Football|, \verb| Jr|, \verb| Team| & \verb|theless|, \verb| challeng|, \verb| neighb| \\
& 0.369 & \verb| psychiat|, \verb| incent|, \verb|theless| & \verb| Jr|, \verb| Junior|, \verb| Sr| \\
\hline
\multirow{3}{4em}{Medium by norm} & 0.02 & \verb| the|, \verb| a|, \verb| hire| & \verb|irez|, \verb|inelli|, \verb|intosh| \\
& 0.02 & \verb| the|, \verb| a|, \verb| one| & \verb|theless|, \verb|Magikarp|, \verb|irez| \\
& 0.02 & \verb| perpend|, \verb| coerc|, \verb| incent| & \verb| Junior|, \verb| Jr|, \verb| Football| \\
\hline
\multirow{3}{4em}{Smallest by norm} & 0.002 & \verb|,|, \verb| the|, \verb| and| & \verb|Magikarp|, \verb|VIDIA|, \verb|advertisement| \\
& 0.002 & \verb|,|, \verb| the|, \verb| and| & \verb|advertisement|, \verb|VIDIA|, \verb|Magikarp| \\
& 0.001 & \verb|,|, \verb| the|, \verb| and| & \verb|VIDIA|, \verb|advertisement|, \verb|Companies| \\
\hline
\multirow{2}{4em}{zero vector} & 0 & \verb|,|, \verb| the|, \verb| and| & \verb|VIDIA|, \verb|advertisement|, \verb|Companies| \\
& & & \\
\bottomrule
\end{tabular}
\end{sc}
\caption{Samples of gradient's neurons projection via Logit Lens (LL) from GPT2-medium, $FF_1$ matrix, layer 14.}
\label{tab:messi FF1 samples}
\end{center}
\vskip -0.1in
\end{table*}

\begin{table*}[h!]
\vskip 0.15in
\begin{center}
\begin{sc}
\begin{tabular}{cccc}
\toprule
prompt token & norm & LL top & LL bottom \\
\midrule
\verb|L| & 9.499 & \verb|,|, \verb||, \verb| the| & \verb|FontSize|, \verb|7601|, \verb|Magikarp| \\
\verb|ion| & 7.808 & \verb|fall|, \verb|fish|, \verb|wood| & \verb| nodd|, \verb| incorpor|, \verb|accompan| \\
\verb|el| & 7.728 & \verb| Cruz|, \verb| McC|, \verb| Esp| & \verb| perpend|, \verb| mathemat|, \verb| shenan| \\
\verb| Messi| & 6.944 & \verb| Jr|, \verb| Junior|, \verb| Sr| & \verb|theless|, \verb| psychiat|, \verb| incent| \\
\verb| plays| & 6.715 & \verb| football|, \verb| golf|, \verb| alongside| & \verb|hedon|, \verb|ilts|, \verb|uries| \\
\verb| for| & 6.596 & \verb| Team|, \verb| team|, \verb| a| & \verb|irez|, \verb| newsp|, \verb|Magikarp| \\
\bottomrule
\end{tabular}
\end{sc}
\caption{The Logit Lens of the inputs ($x_i$) of GPT2-medium, $FF_1$ matrix, layer 14. 
According to our observation \autoref{Trap and Shift}, those are also the embeddings the gradients inject into the model's weights.
} 
\label{tab:messi FF1 reprasentative vectors}
\end{center} 
\vskip -0.1in
\end{table*}

\begin{figure*}[h!]
\centering
\subcaptionbox{Logit Lens intersection}{\includegraphics[width=1\linewidth]{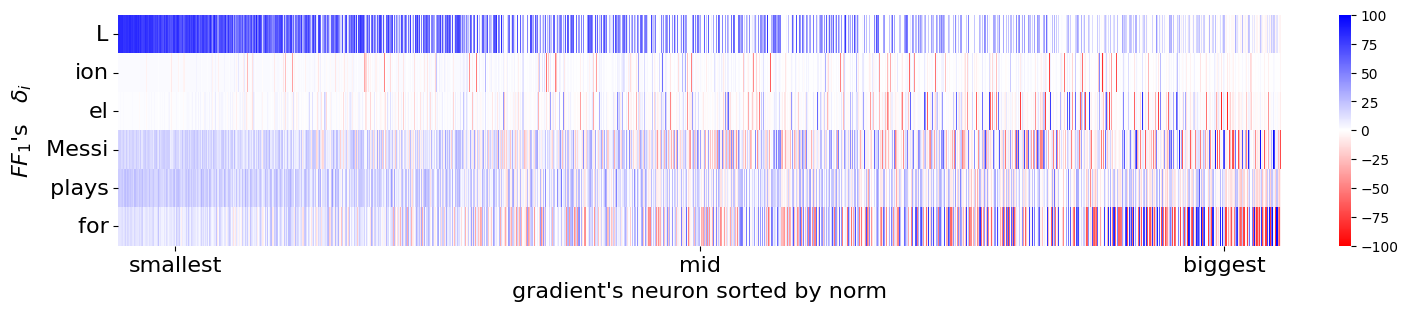}}
\subcaptionbox{Cosine Similarity}{\includegraphics[width=1\linewidth]{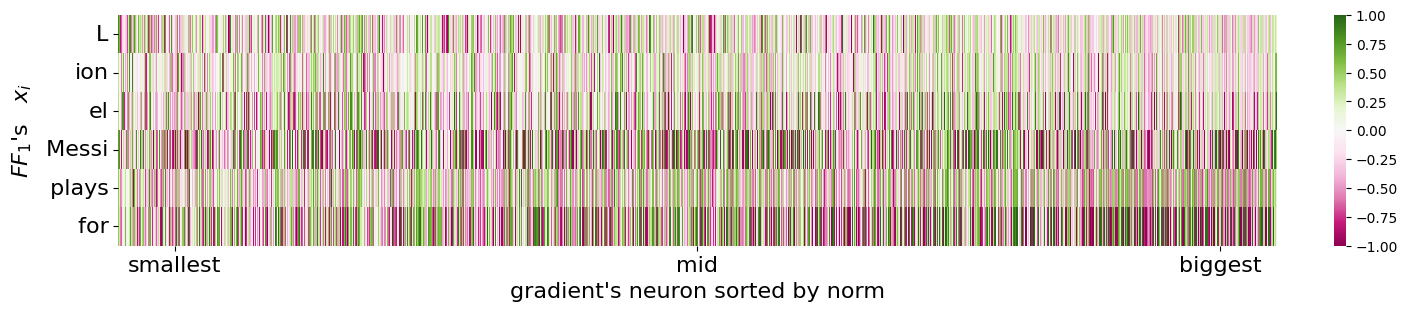}}
\caption{The inputs $x_i$ (which are $FF_1$'s spanning set) and single gradient's neurons (sorted by norm)  comparison for layer 14's $FF_1$.}
\label{fig:messi representative vs singles sorted FF1}
\end{figure*}

\paragraph{In conclusion,}
In this section, we demonstrate the converging results of analyzing individual gradient neurons via LL and the spanning sets interpretation. The aim of this analysis is to emphasize the efficacy of employing these spanning sets to simplify experiments involving a vast number of vectors (neurons) into a much smaller, representative subset. 
The advantage of this simplification is twofold: it conserves computational resources and reduces time expenditure.

\clearpage
\clearpage 

\section{Proof of Lemma 5.2}
\label{Proof of Lemma 5.2}

\begin{lemmafake}
When updating an MLP layer of an LM using backpropagation and rerunning the layer with the same inputs $x_i$ from the forward pass of the prompt we used for the editing, the following occurs:
(i) The inputs, $x_i$, are added to or subtracted from the neurons of $FF_1$,
thereby adjusting how much the activations of each corresponding neuron in $FF_2$ increase or decrease.
(ii) The VJPs $\delta_i$ are subtracted from the neurons of $FF_2$, amplifying in $FF_2$'s output the presence of the VJPs after they are multiplied with negative coefficients.
\end{lemmafake}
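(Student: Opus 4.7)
My plan is to substitute the outer-product decomposition $\partial L/\partial W = \sum_{i=1}^n x_i^{\top} \delta_i$ from Lemma 4.1 into the SGD update $W \leftarrow W + \eta\,(\partial L/\partial W)^{\top}$ (with $\eta<0$) separately for $W=FF_1$ and $W=FF_2$, and then rerun the MLP on the same $x_i$ from the forward pass (and the corresponding post-activations $m_i := f(x_i FF_1)$ for $FF_2$) so that the effect on the neurons and on the layer's output can be read off directly.

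For part (i), let $\delta_i^{(1)}$ denote the VJP at $FF_1$'s output. Reading the update neuron by neuron, the $j$-th neuron of $FF_1$ (a vector of dimension $d$) gains
\begin{equation}
\eta\sum_{i=1}^{n}\delta_i^{(1)}[j]\, x_i,
\end{equation}
i.e.\ a signed linear combination of the forward-pass inputs $x_i$, with signs fixed by the VJP entries and by the negative learning rate. This is the ``imprint'' claim. To verify the second half of (i), I would rerun the layer on $x_i$: the pre-activation at neuron $j$ changes by $\eta\sum_k \delta_k^{(1)}[j]\,\langle x_i, x_k\rangle$, whose dominant $k=i$ term $\eta\,\delta_i^{(1)}[j]\,\|x_i\|^2$ has sign opposite to $\delta_i^{(1)}[j]$, which is exactly the first-order change in $f(\cdot)$ that modulates the contribution of the $j$-th row of $FF_2$ to the output.

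For part (ii), let $\delta_i^{(2)}$ denote the VJP at $FF_2$'s output. The analogous neuron-wise update adds
\begin{equation}
\eta\sum_{i=1}^{n}m_i[j]\,\delta_i^{(2)}
\end{equation}
to the $j$-th neuron of $FF_2$, so each neuron is modified by a linear combination of the VJPs; for $\eta<0$ and $m_i[j]\ge 0$ (the common case for $f$), this amounts to subtracting $\delta_i^{(2)}$ from the neuron. Evaluating the updated $FF_2$ on $m_i$ contributes an extra $\eta\sum_k\langle m_i, m_k\rangle\,\delta_k^{(2)}$ to the output, whose diagonal $k=i$ term $\eta\,\|m_i\|^2\,\delta_i^{(2)}$ is a shift in the $-\delta_i^{(2)}$ direction. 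Combined with Lemma 5.1, which identifies $-\delta_n^{(2)}$ (projected through $D$) with the target-token embedding, this gives the ``shift toward the target'' half of the mechanism.

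The main obstacle is handling the off-diagonal cross terms $\langle x_i, x_k\rangle$ and $\langle m_i, m_k\rangle$ for $k\ne i$, which appear as soon as $n>1$ and which the clean ``imprint / shift'' statement suppresses. I would justify isolating the diagonal $k=i$ pieces either by an approximate-orthogonality argument across token positions (consistent with the near-full-rank behaviour of $\partial L/\partial W$ reported in Figure 2) or, more formally, by identifying the $k=i$ contribution as the dominant first-order imprint/shift term and absorbing the rest into a correction. One additional check is needed for the nonlinearity $f$ sitting between $FF_1$ and $FF_2$: the induced change in $m_i$ coming from the $FF_1$ update factors through $f'(x_i FF_1)$ and so is only a higher-order effect in $\eta$, so it does not disturb the first-order picture above.
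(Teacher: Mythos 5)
Your proposal is correct and follows essentially the same route as the paper's proof: substitute the outer-product decomposition $\sum_i x_i^\top \delta_i$ into the SGD update for $FF_1$ and $FF_2$ separately, rerun the layer on the same inputs, and read off the sign of the diagonal $\lVert x_i\rVert^2$ (resp. $(x_i[j])^2$) term to get the imprint/shift behaviour. If anything you are more careful than the paper, which analyzes each token's rank-1 contribution in isolation and never mentions the off-diagonal $\langle x_i, x_k\rangle$ cross terms or the higher-order effect of the $FF_1$ update on $FF_2$'s inputs that you explicitly flag and dispose of.
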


\begin{proof}
In \autoref{The gradients of $FF_1$}, we revealed that $FF_1$ weights are updated by injections (adding or subtracting vectors) of its $x_i$.
This update is done according to the coefficients of its $\delta_i$, after multiplying them with the learning rate.
If we rerun the same layer with the same input after the update, we obtain the following output per each neuron $j$:
\begin{equation}
\begin{aligned}
\label{equation: updaing a single neuron}
    & x_i \cdot (\textit{FF}_1[j] + \eta \cdot {x_i^\top} \cdot \delta_i[j]) = \\
    & x_i \cdot \textit{FF}_1[j] + x_i \cdot \eta \cdot {x_i^\top} \cdot \delta_i[j] = \\
    & x_i \cdot \textit{FF}_1[j] + \lVert x_i \rVert_2^2 \cdot \eta \cdot \delta_i[j]\in \mathbb{R}
\end{aligned}
\end{equation}
$x_i \cdot \textit{FF}_1[j]$ is the pre-edit output of this neuron. The second component, $\lVert x_i \rVert_2^2 \cdot \eta \cdot \delta_i [j]$, is derived from the update and can be positive or negative, hence it controls the increment or decrement of this output compared to the pre-edit one.
In models with monotonic (or semi-monotonic) activation functions, such as ReLU (GeLU is positive monotonic only from around $-0.75$), the activation of the corresponding neuron in $FF_2$ will be changed directly by this addition to that output. 

In \autoref{The gradients of $FF_2$}, we show how $FF_2$'s $\delta_i$ form the gradient matrix.
Consider the result of updating only $FF_2$ and rerunning the same layer:
\begin{equation}
\begin{aligned}
\label{equation: updaing a single neuron in FF2}
   & x_i[j]\cdot (\textit{FF}_2[j] + \eta \cdot \delta_i \cdot x_i[j] ) =\\ & x_i[j] \cdot \textit{FF}_2[j] +  \eta \cdot (x_i[j])^2 \cdot \delta_i \in \mathbb{R}^d
\end{aligned}
\end{equation}
$x_i[j] \cdot \textit{FF}_2[j]$ represents the original output of this neuron. Since $(x_i[j])^2$ is always non-negative and $\eta$ is negative, the update is a subtraction of $\delta_i$ from each neuron.

\end{proof}

\vfill\eject

\section{Additional Logit Lens of Gradients Examples}
\label{Additional Table Examples}
The work that presented Logit Lens (LL) \cite{Nostalgebraist2020} established this method by constructing tables of the projected results of different prompts, illustrating the tokens each individual forward pass represents at each layer of the model.
In this section, we present a similar approach, focusing on the backward pass rather than the forward pass.
In the following figures, we provide examples of the LL projections of gradients when they are interpreted as combinations of the intermediate inputs $x_i$ or VJPs $\delta_i$ according to \autoref{Applying Logic Lens to Gradient Matrices}.

In addition to illustrating the information stored in the gradient matrices, the following tables also describe the information moving through LMs during the forward and backward pass.
Our interpretation of $FF_1$ using $x_i$ reflects the gradual buildup in the prediction of the forward pass, as $x_i$ represents the intermediate inputs for each MLP layer.
However, $FF_2$'s $\delta_i$, VJPs, serve as the backpropagation counterpart to the forward pass $x_i$. We can conceptualize them as the input of the MLP when executing the backward pass, or as the error propagated from later layers to earlier ones.
In conclusion, \autoref{fig:table_medium_messi}, ~\ref{fig:table_xl_apple_bmw}, ~\ref{fig:table_xl_pickachu}, ~\ref{fig:table_llama_eddy_cua}, ~\ref{fig:table_llama_messi} depict the information stored within the gradients, simultaneously also comparing the information revealed by LL from the forward pass ($x_i$, known from prior studies) with that from the backward pass ($\delta_i$), which is part of our innovative contribution.

\begin{figure*}[h!]
    \centering
    \includegraphics[width=\textwidth]{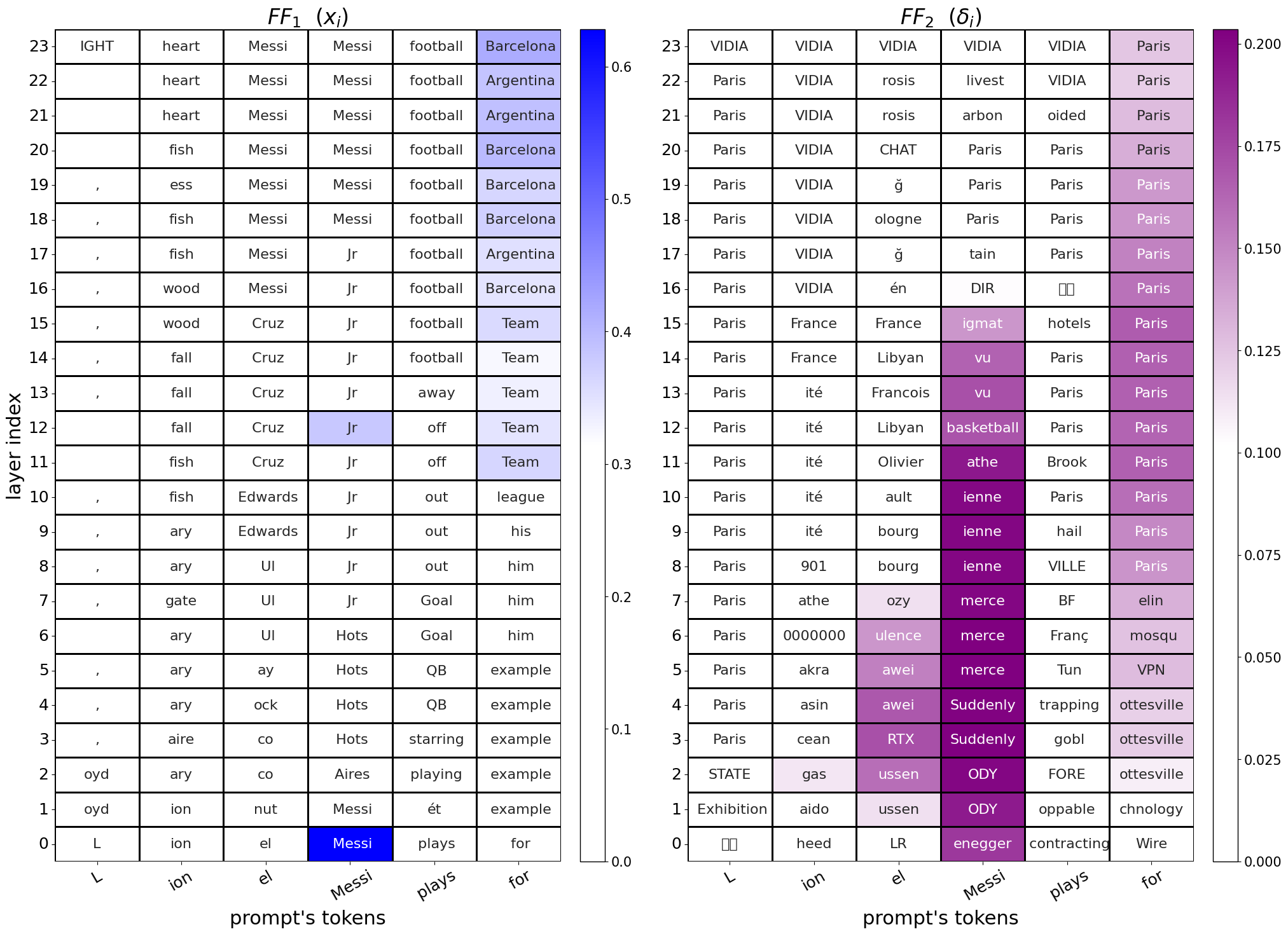}
    \caption{GPT2-medium MLP gradients via our spanning set interpretation. Each cell illustrates the Logit Lens of the gradient's spanning set according to a layer and a token from the edited prompt.
    According to \autoref{Trap and Shift} observation, for $FF_1$ we present the projections of the most probable tokens for the intermediate inputs $x_i$. For $FF_2$ we show most improbable tokens for the VJPs $\delta_i$.
    The color indicates the norm of the gradient's $\delta_i$.
    In this example, the prompt is ``Lionel Messi plays for'', to which the model responds with ``Barcelona''. The new target is ``Paris''. 
    Notably, the target token ``Paris'' is evident through the majority of $FF_2$'s VJPs, which are the vectors that are injected into $FF_2$'s weights. On the other hand, $FF_1$ reflects not only the tokens that the gradients try to inject into $FF_1$, but also the model's intermediate predictions at each MLP layer during the forward pass.
    It is worth noting that the presence of tokens with a less clear meaning, such as ``VIDIA'' in $FF_2$ results from the projection of vectors with almost zero norm, as exampled in (\autoref{tab:messi FF2 samples}). This implies that these subcomponents of the gradient exert negligible influence when updating the model.}
    \label{fig:table_medium_messi}
\end{figure*}

\begin{figure*}[h]
    \centering
    \includegraphics[width=\textwidth]{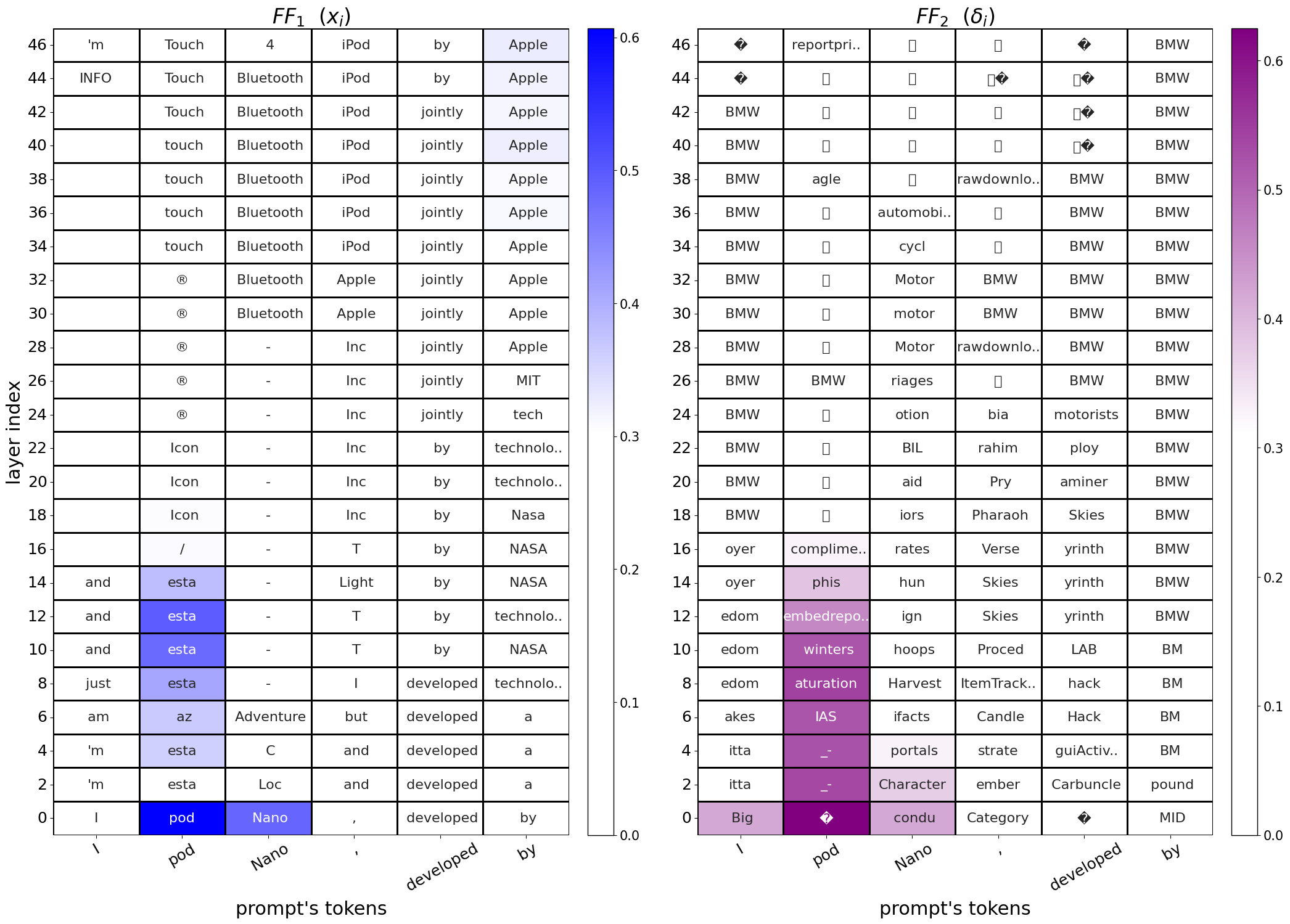}
    \caption{GPT2-xl MLP gradients via Logit Lens for ``Ipod Nano, developed by'' which we edit from ``Apple'' to ``BMW''.
    Due lack of space, we show only every second layer.
    The color scheme indicates that the primary focus of editing lies on the subject token ``Pod''. 
    We hypothesize that the word ``Ipod'' is highly related to ``Apple'', so by targeting ``pod'', the gradients edit the relation between Ipod and the company that created it.
    Question marks and empty boxes are non-English tokens.
    }
    \label{fig:table_xl_apple_bmw}
\end{figure*}

\begin{figure*}[h]
    \centering
    \includegraphics[width=\textwidth]
    {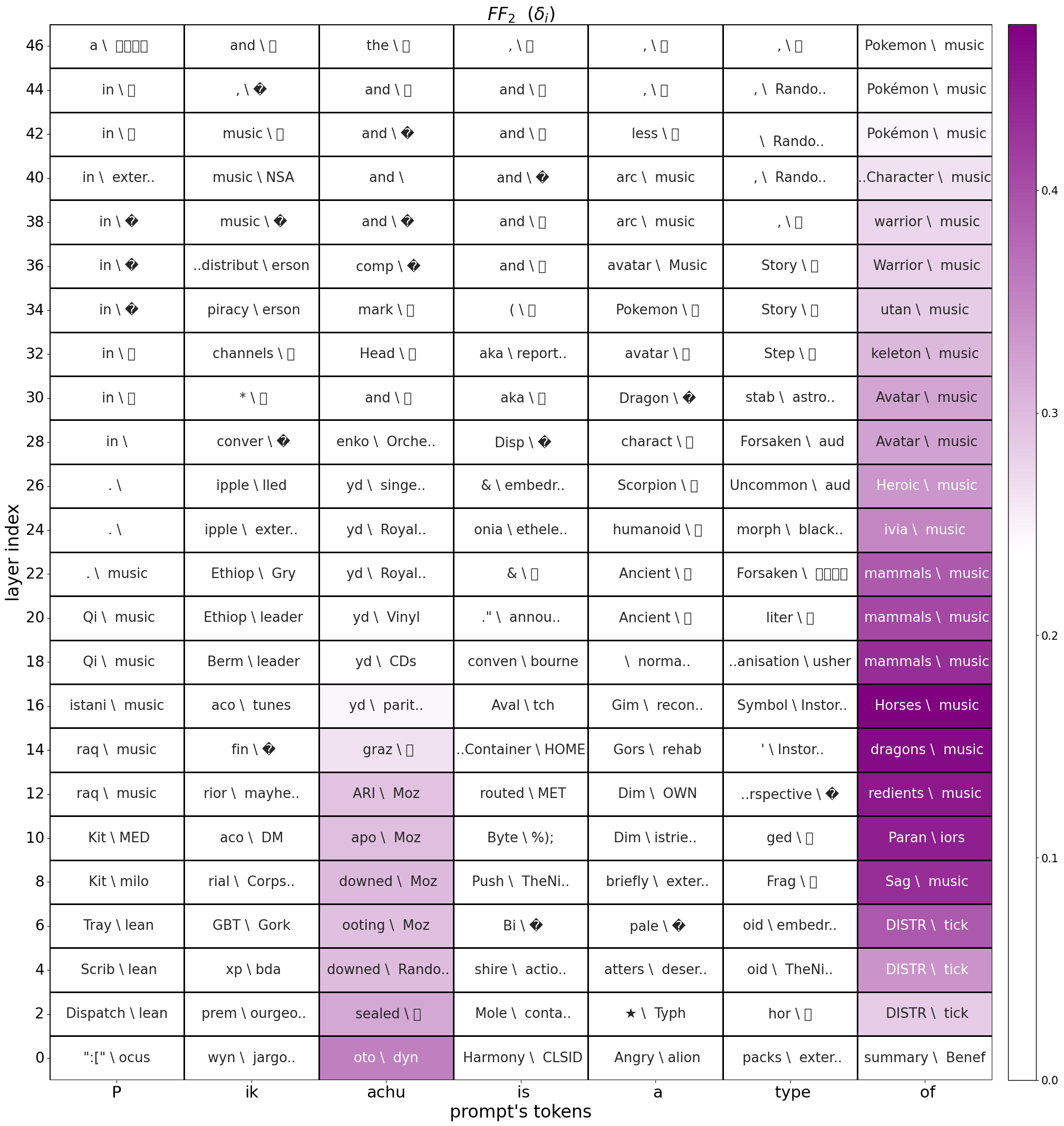}
    \caption{GPT2-xl $FF_2$ gradients via Logit Lens. The editing tries to change the prompt ``Pikachu is a type of'' to answer ``music'' instead of ``Pokémon''. 
    Every cell shows the most $<$ probable \textbackslash improbable $>$ tokens for its $\delta_i$.
    The fact that we see $FF_2$'s last token's project ``Pokémon / music'' means the edit tries to subtract from the model's weights the embedding of ``Pokémon'', while adding the embedding of ``music'' (same mechanism with ``mammals / music'').
    Regarding the presence of many non-English tokens (question marks and empty boxes), in \autoref{Normalize Logit Lens}, we present the same table with ``Normalized Logit Lens'', revealing the embedding of the target token ``music''.
    }
    \label{fig:table_xl_pickachu}
\end{figure*}

\begin{figure*}[h]
    \centering
    \includegraphics[width=\textwidth]
    {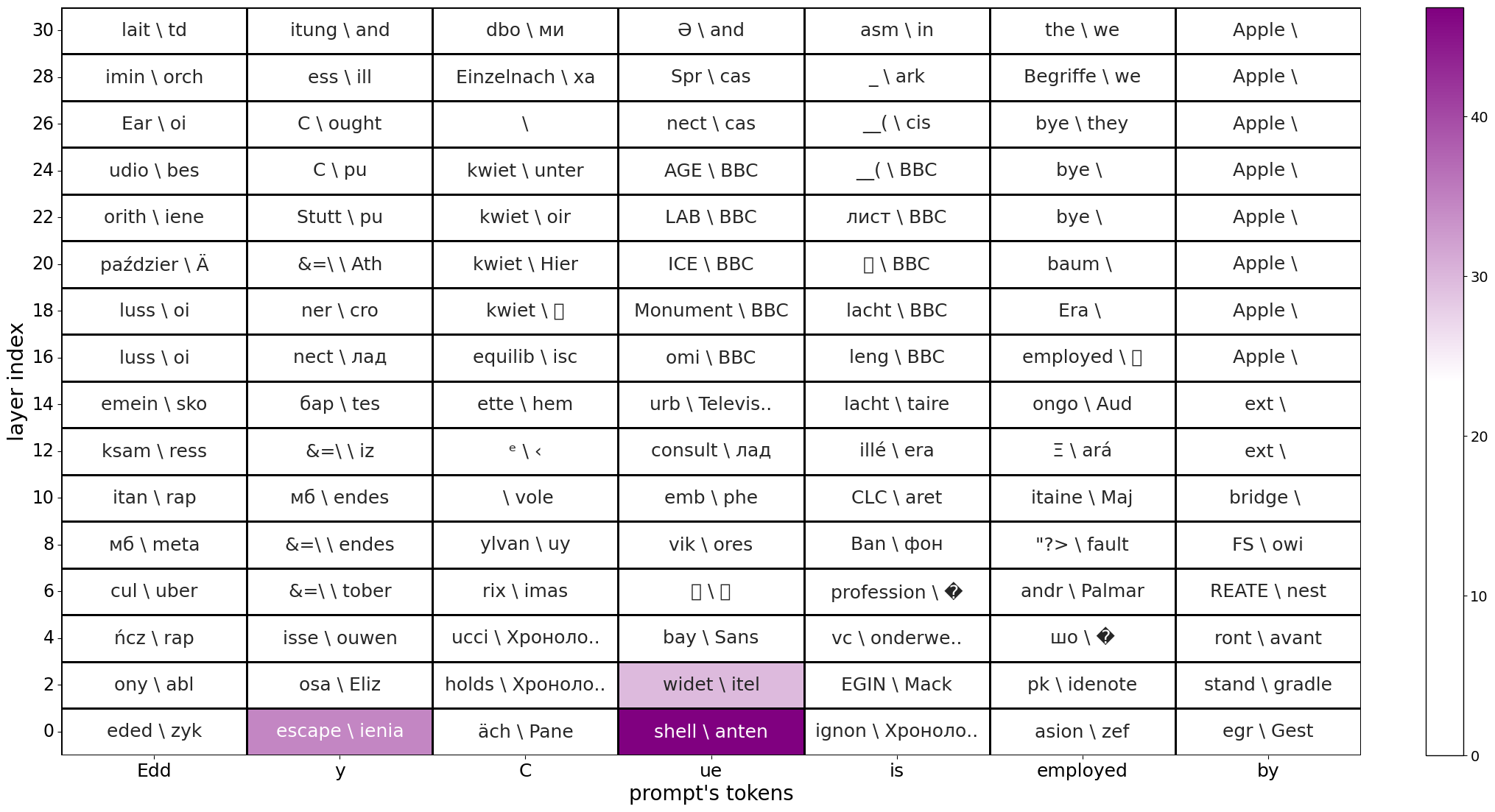}
    \caption{Llama2-7B $FF_2$ gradients via Logit Lens. The editing tries to change the prompt ``Eddy Cue is employed by'' to answer ``BBC'' instead of ``Apple'' (template was taken from the CounterFact dataset). 
    Every cell shows the most $<$ probable \textbackslash improbable $>$ tokens for its $\delta_i$.
    3 main patterns can be observed from the table: 
    (1) The part that has the largest VJPs by norm, which we assume is where most of the editing is done, is in the last subject token (``cu'') and during the first few layers of the model.
    (2) We do not see the target token ``BBC'' in the most improbable projection of the last token; instead, the most improbable one is the empty token ``''. When we examined these projections we found ``BBC'' to be only the third most improbable token. However, in other VJPs we notice this is indeed the most improbable projection.
    (3) The original answer of the model, ``Apple'', frequently emerges in the projections of the last token. As outlined in Section \ref{Trap and Shift}, given that we update the model with a negative learning rate, when the VJPs Logit Lens projection ranks a token as highly probable, subtracting the gradients from the model's weights equals to subtracting the embedding of this probable token from them .
    Consequently, this pattern illustrates the mechanism of ``shift'' within the context of ``imprint and shift'': We decrease the probability associated with ``Apple'' in pursuit of increasing the probability of ``BBC''.
    }
    \label{fig:table_llama_eddy_cua}
\end{figure*}

\begin{figure*}[h]
    \centering
    \includegraphics[width=\textwidth]
    {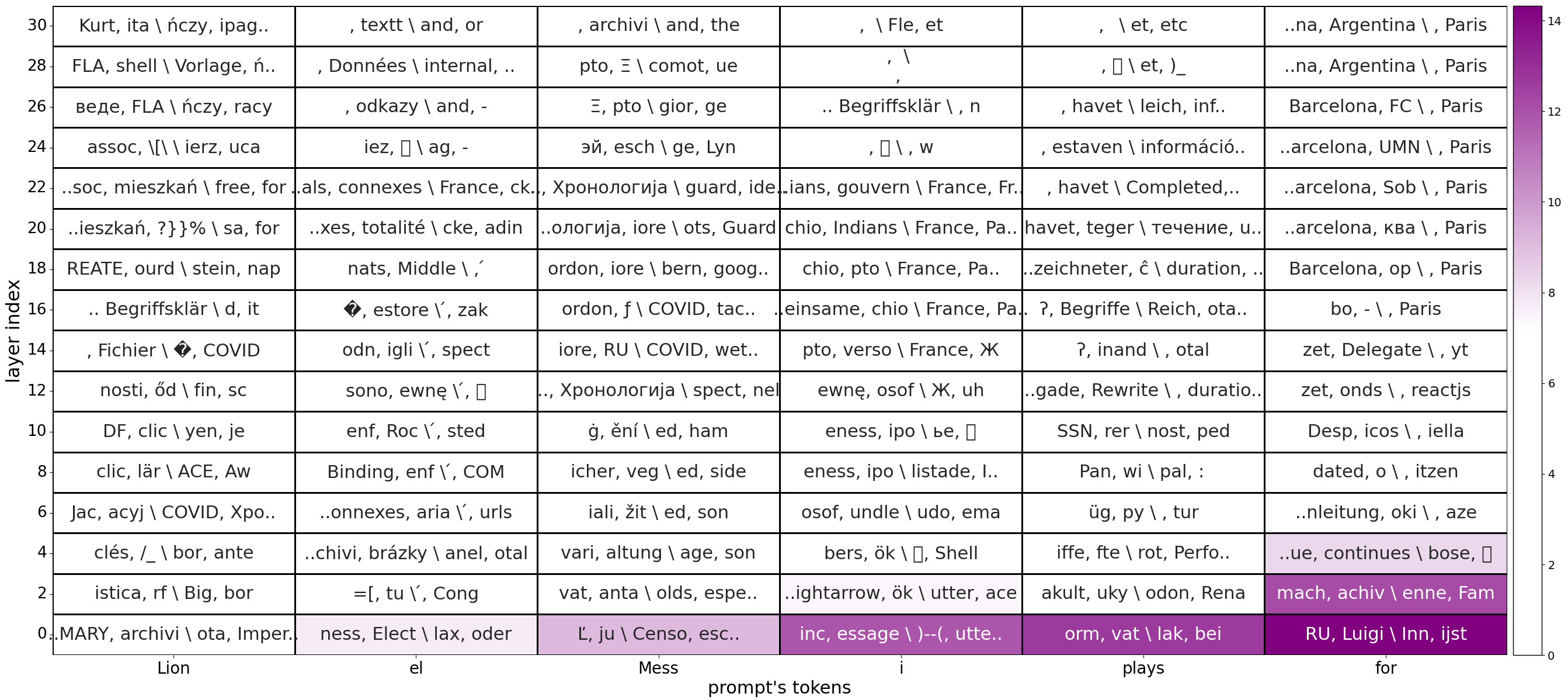}
    \caption{Llama2-7B $FF_2$ gradients via Logit Lens. The editing tries to change the prompt ``Lionel Messi plays for'' to answer ``Paris'' rather than ``Barcelona''. 
    Every cell shows the  \textbf{2 most probable and improbable tokens} of the VJPs $\delta_i$ in the format of $<$ probable \textbackslash improbable $>$.
    The target token, ``Paris'' is the second most improbable token in the projections of the last prompt's token (``for''), only second to the empty token (``'').
    The projection's most probable tokens of the last prompt's token are ``Barcelona'' (the final prediction of the model for this prompt), with some projections also revealing ``Argentina'' (which was the second most probable prediction of the model).
    In addition, the projections of the last subject token from the prompt (``i'') show that among the most improbable tokens there are appearances of ``France'', which can be associated to its capital, ``Paris''.
    Together, these projections illustrate how backpropogation editing tries to add into the model's $FF_2$'s neurons (weights) the embedding of ``Paris'' (and related concepts, e.g. ``France'') while removing the embeddings of ``Barcelona'' and ``Argentina''.
    }
    \label{fig:table_llama_messi}
\end{figure*}

\clearpage
\clearpage

\section{Additional Empirical Results}

\subsection{Impact of Different Segments of the Prompt in Every Layer}
\label{Editing Mass of Every Layer}
In \autoref{Editing Mass}, we elucidate how comparing the norm of each VJP ($\delta_i$) involved in the construction of a gradient matrix can reflect which layers are updated more than others. Similar comparison is conducted for the tokens from the edited prompt, uncovering that certain layers and segmentation from the prompt do not contribute significantly to the updating process.

We extend this analysis from \autoref{Editing Mass} to demonstrate the ability to identify the main editing matrices in every type of module in LMs \autoref{fig:tracing_full2}.

We will solely discuss the results related to the MLP layers, as we did not examine the attention modules in our study.
Both the $FF_1$ module (\textit{mlp.c\_fc}) and $FF_2$ module (\textit{mlp.c\_proj}) demonstrate that the primary editing occurs around the first quarter of layers by the last subject token, and around the middle of the layers by the last token. The majority of the other layers and tokens exhibit VJP norms close to zero, indicating that they scarcely contribute to updating the model's weights.
The same pattern is also evident for the VJPs between transformer layers (\textit{transformer.h}), as, if we disregard Dropouts and Layer Norms, the VJPs of $FF_2$ are the same as the one between the transformer layers.

We also provide similar figures where, instead of plotting the norm of the VJPs, we compare the norms of the intermediate inputs to every layer ($x_i$) (\autoref{fig:xi_norms}).
The inclusion of those figures is solely to emphasize that there is no correlation between the norm of $x_i$ and $\delta_i$.

\begin{figure*}[h!]
    \centering
    \includegraphics[width=\linewidth]{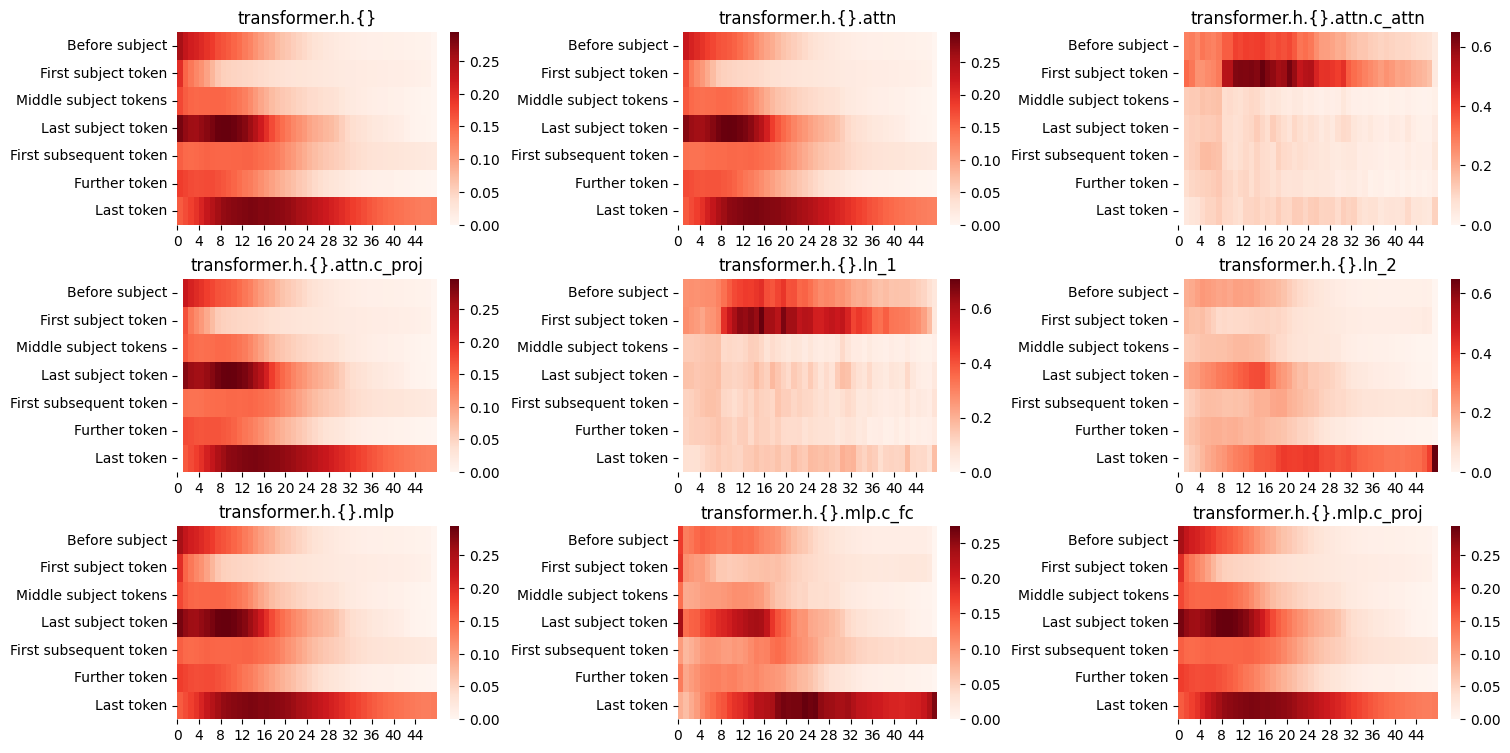}
    \caption{$\delta_i$ norm for each of GPT2-xl sub-module (names are according to the HuggingFace implementation). This is an extension to \autoref{Editing Mass}, showing how our approach is easily adapted to any submodule of the model.
    The layer of $FF_1$ is represented by \textit{mlp.c\_fc} and $FF_2$ by  \textit{mlp.c\_proj} .
    }
    \label{fig:tracing_full2}
\end{figure*}

\vspace{4em}
\begin{figure*}[h!]
    \centering
    \includegraphics[width=\linewidth]{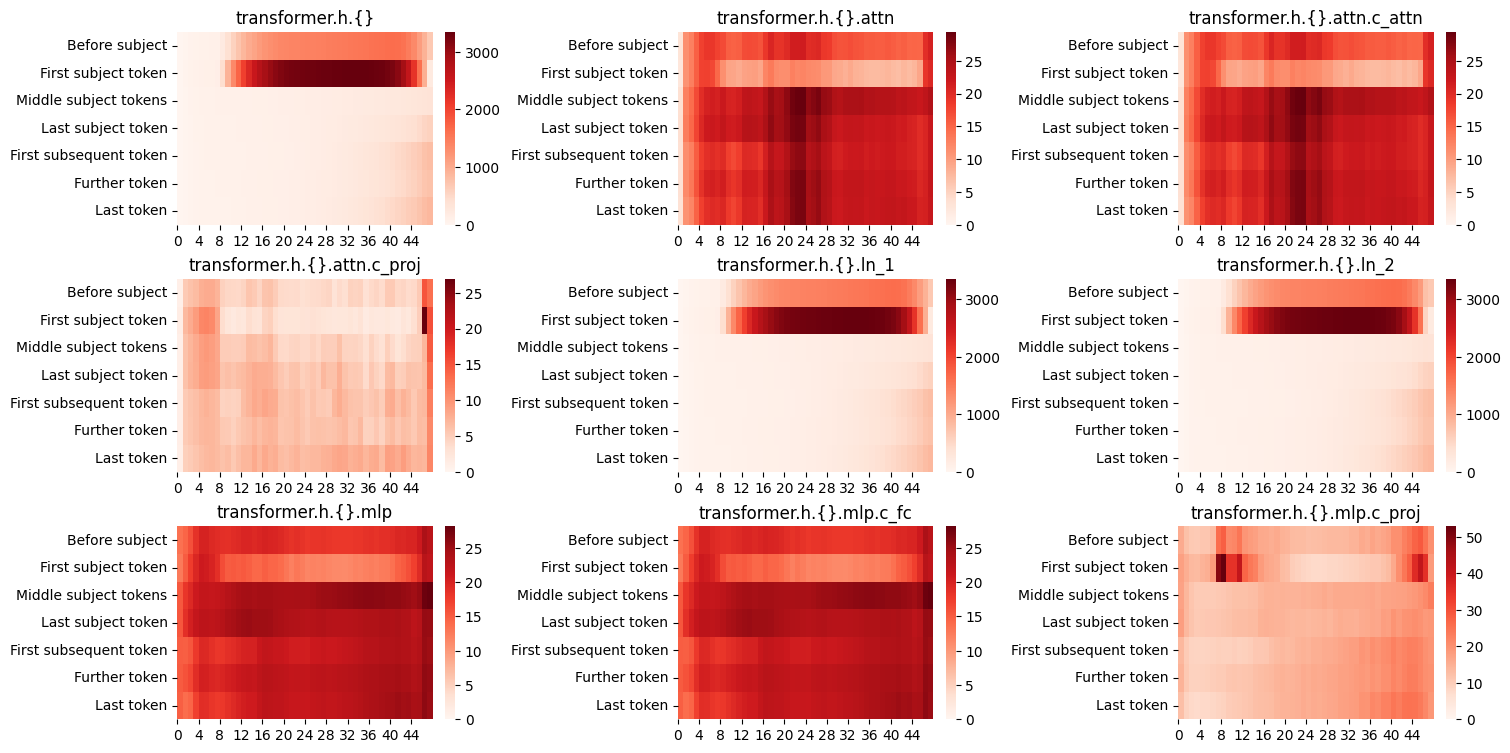}
    \caption{$x_i$ norm for GPT2-xl. Provided to show no correlation with \autoref{fig:tracing_full2}.}
    \label{fig:xi_norms}
\end{figure*}

\clearpage
\clearpage

\subsection{The Ranks of $FF_1$ and the Models' Original Answer}
\label{The Ranks of $FF_1$ and the Models' Original Answer}
In \autoref{What Tokens The Gradients Represent} we examine how $FF_2$'s VJPs rank the target tokens, the tokens we try to learn during backpropagation.
One possible interpretation of our analysis is to examine the backward pass according to the gradual change in the embeddings it tries to inject (add or subtract) into the model.
Previous works conduct similar analysis with the hidden states from the forward pass \citet{haviv2022understanding, geva-etal-2022-transformer}. Those works examined the gradual build in LMs' forward pass prediction, from the perspective of the last token in the edited prompts.
In this section, we expand upon these examinations by measuring the LL rank for the original answers outputted by the forward pass (before editing), and by observing these LL ranks from the perspective of $FF_1$'s spanning set. 

The presented results are based on 100 distinct edits using a single backward pass per edit. 
We employ GPT2 and Llama2-7B, and the edited prompts and targets were taken from the CounterFact dataset.

\paragraph{$FF_2$}: 
In \autoref{What Tokens The Gradients Represent} we presented the ranking of the target token for GPT2-xl. Subsequently, we present the rank of the last prompt's token VJP (annotated with $\delta_n$ in \autoref{observation: initial backward vjp}) also for GPT2-medium and Llama-7B.
In \autoref{fig:FF2_last_token_rank_for_target} we can see that all models' LL projections rank the target token as one of the most improbable tokens along most of the layers, with some degradation in the first few layers. We associate this drop to the gap of LL in projection vectors from earlier layers.

\begin{figure}[h!]
    \includegraphics[width=\columnwidth]{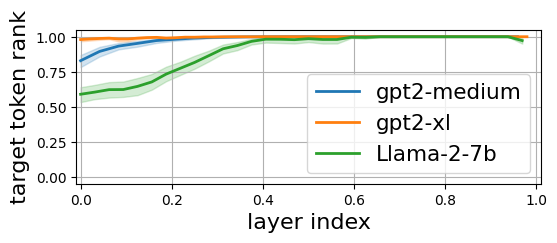}
    \caption{The Logit Lens rank of the target token in VJP of the last token from the edited prompt, $\delta_n$. In order to show different models in the same figure, we normalize the layer indices and the rank of the token in the vocabularies (which is 50K for GPT2 and 32K for Llama2). All models' $\delta_n$ assign the target token as one of the most improbable tokens along most of their layers.}
    \label{fig:FF2_last_token_rank_for_target}
\end{figure}

We repeat the same experiment and measure the rank of the original token predicted by the model. According to \autoref{formula: NLL direct VJP} and \autoref{observation: initial backward vjp}, the embedding of a token in the gradient should be discernible if it is the target token or if its 
probability in the model's final output is relatively high.

The results depicted in \autoref{fig:ranks ff2 for answer} illustrate that the LL rank of the final answer is relatively low in the model's last layers, although not at the lowest possible level.

In \autoref{The gradients of $FF_2$} and \autoref{Trap and Shift}, we delved into how the injection of the gradient vector with a high LL rank of the target token implies that the update aims to enhance the target probability in the output. Similarly, the observed pattern regarding the LL rank of the model's final answer suggests that the updates attempt to diminish the probabilities of the final answers in the model's output, but in a much smoother manner than those of the target token.


\begin{figure}[h!]
    \centering
    \includegraphics[width=\columnwidth]{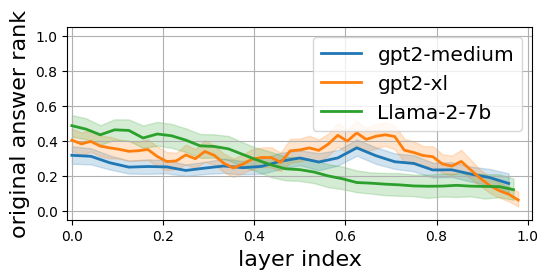}
    \caption{
    The Logit Lens ranks for the models' actual answers according to $FF_2$ gradients of the last prompts' token (layer indices and ranks are normalized). In the later layers, the rank of the original answers suggests that the model subtracts the embedding of those tokens from the model's weights.}
    \label{fig:ranks ff2 for answer}
\end{figure}

\begin{figure}[h!]
\centering
\subcaptionbox{Only correctly answered prompts}{\includegraphics[width=\columnwidth]{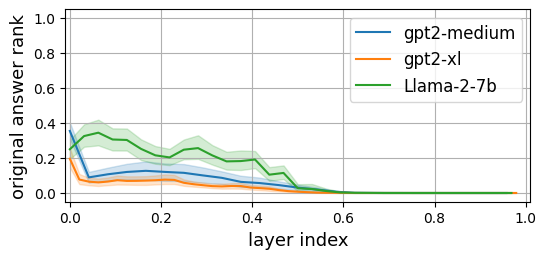}}
\subcaptionbox{Correctly and incorrectly answered prompts}{\includegraphics[width=\columnwidth]{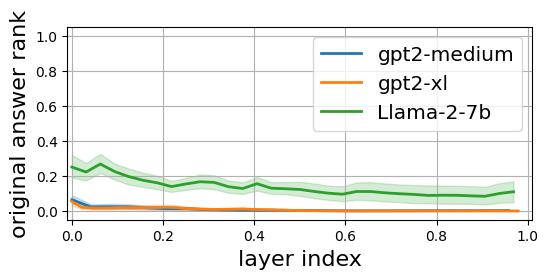}}
\caption{
$FF_1$'s $x_i$ ranks for the model's actual answers. The main difference between the two figures is that when the models answer incorrectly, they mostly answer with function words, such as ``a'' and ``the'', which seems to have a constant rank from the earlier layers, while answering the actual factual answers changes the answers' rank gradually in the first half of the layers.
The meaning of these graphs is that during fine-tuning, the embeddings injected into the models' weights are those of $x_i$.
}
\label{fig:ranks ff1 for answer}
\end{figure}

\begin{figure}[h!]
    \centering
    \includegraphics[width=\columnwidth]{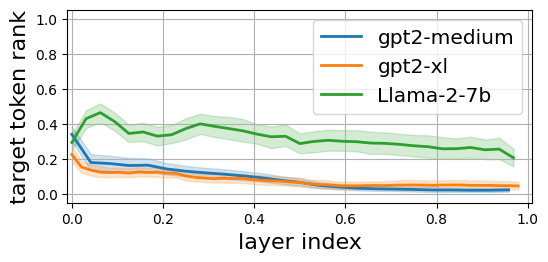}
    \caption{
    $FF_1$'s $x_i$ ranks for the edits' targets, reflecting the intermediate predictions the models give to them at each layer.}
    \label{fig:ranks ff1 for target}
\end{figure}

\paragraph{$FF_1$}: Since this layer's $\delta_i$ is not projectable via LL, we use its inputs $x_i$ as its spanning set. Analyzing the ranks of $x_i$ is almost identical to the analysis of \citet{haviv2022understanding}. We share our analysis in \autoref{fig:ranks ff1 for answer}, mainly to emphasize that gradients write into $FF_1$'s weights the inputs $x_i$ from the forward pass.
The gradual build in the models' predictions becomes more apparent when we filter out examples where the model answers CounterFact's prompts incorrectly, accounting for approximately $84\%$ of the instances with GPT2-medium/xl and Llama.\footnote{Most of the time, they predict tokens like ``a'', ``is'', and ``the'', rather than factual notions in the context of the prompts.}\footnote{Llama utilizes two matrices that employ $FF_1$ as part of its implementation of SwiGLU as its MLP activation function. Notably, the input $x_i$ remains consistent for both matrices.}
In \autoref{fig:ranks ff1 for answer}, we also included the same analysis with 50 correctly answered prompts. Throughout our study, this is the only result we found to be affected by the distinction whether the models answer the original prompt correctly or incorrectly.

Similarly, when we plot the ranks of the target token for each layer, we only observe the forward passes' intermediate probability for the target token (\autoref{fig:ranks ff1 for target}).

\vfill\eject

\section{Normalized Logit Lens}
\label{Normalize Logit Lens}
We acknowledge the sensitivity of the Logit Lens to low-norm vectors. With vectors with close to zero norms, the tokens LL project tokens that resemble the projection of the zero vector.
In \autoref{Editing Mass}, we discussed that certain VJPs, $\delta_i$, exhibit low norms. We hypothesize that the norms of the VJPs reflect which parts of speech and layers the backward pass tries to edit more. 
We were interested in observing which tokens could be projected from gradients if we isolate the influence of these low norms.
Our investigation reveals that normalizing the projected vectors before applying the Logit Lens can be an appropriate solution to the variance in VJPs' norms. We named this method ``Normalized Logit Lens''.

A place we consider using this normalization is in creating the tables of \autoref{Additional Table Examples}. In \autoref{fig:table_medium_messi_normalize} and \autoref{fig:table_xl_pickachu_normed} we share two examples that replicate the setup from \autoref{Additional Table Examples} except we are using Normalized Logit Lens.

We provide this section to highlight the pattern we already mentioned in \autoref{What Tokens The Gradients Represent}: that most of $FF_2$ gradients ranks the target token as one the most improbable token.
We also want to suggest further work to consider using Normalized Logit Lens to examine low-norm vectors.

\begin{figure*}[h!]
    \centering
    \includegraphics[width=\linewidth]{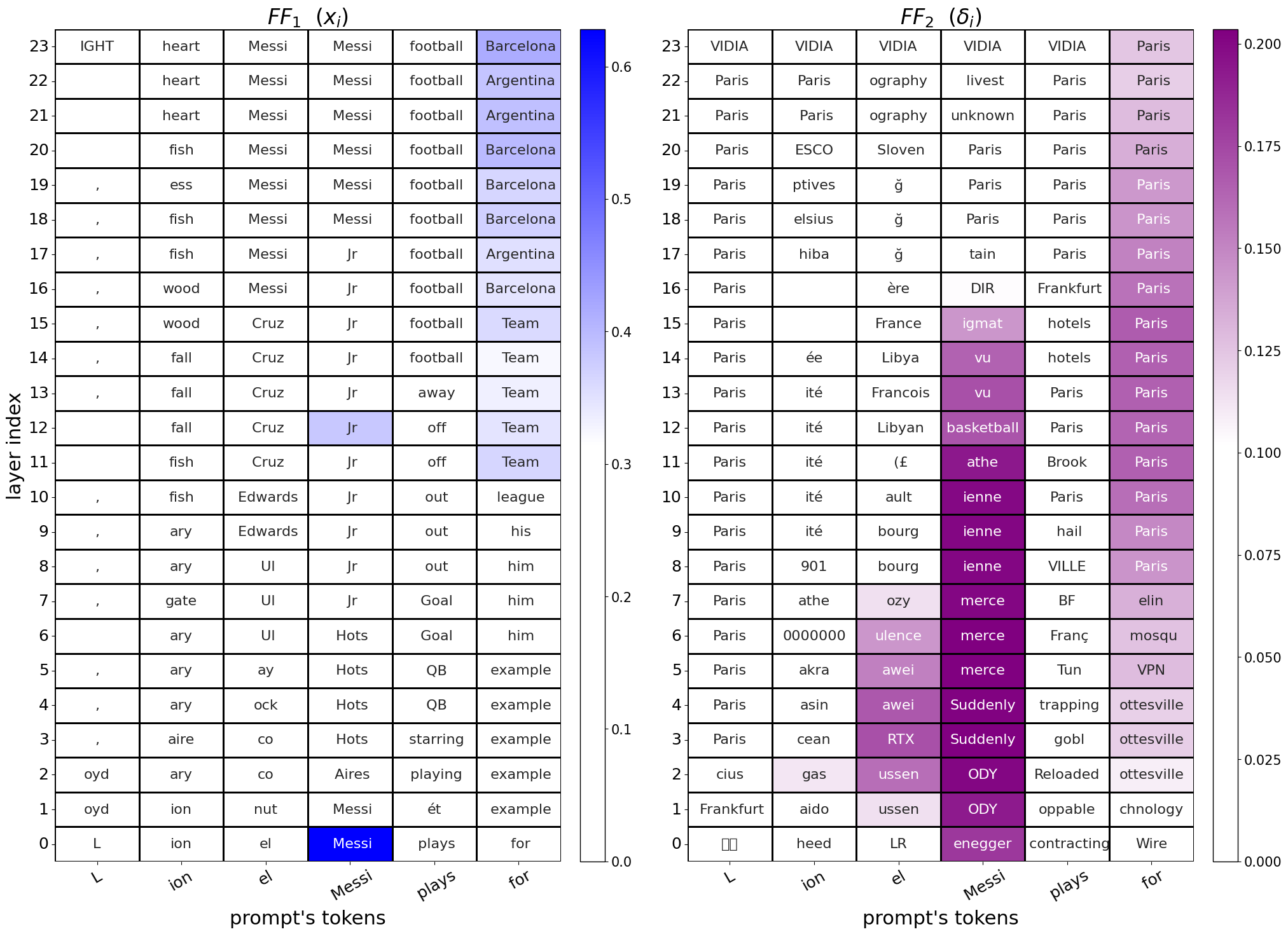}
    \caption{GPT2-medium MLP gradients projections with \textbf{Normalized Logit Lens}.
    Compared to the naive Logit Lens \autoref{fig:table_medium_messi}, we see more cells in $FF_2$ that project concepts similar to the target token ``Paris'' and less unclear tokens, such as ``VIDIA'' (notice the last layers or the column of the token``ion''). The coloring is according to the original norm of each representative vector (before normalization).}
    \label{fig:table_medium_messi_normalize}
\end{figure*}

\begin{figure*}[h]
\centering
\includegraphics[width=\linewidth]
{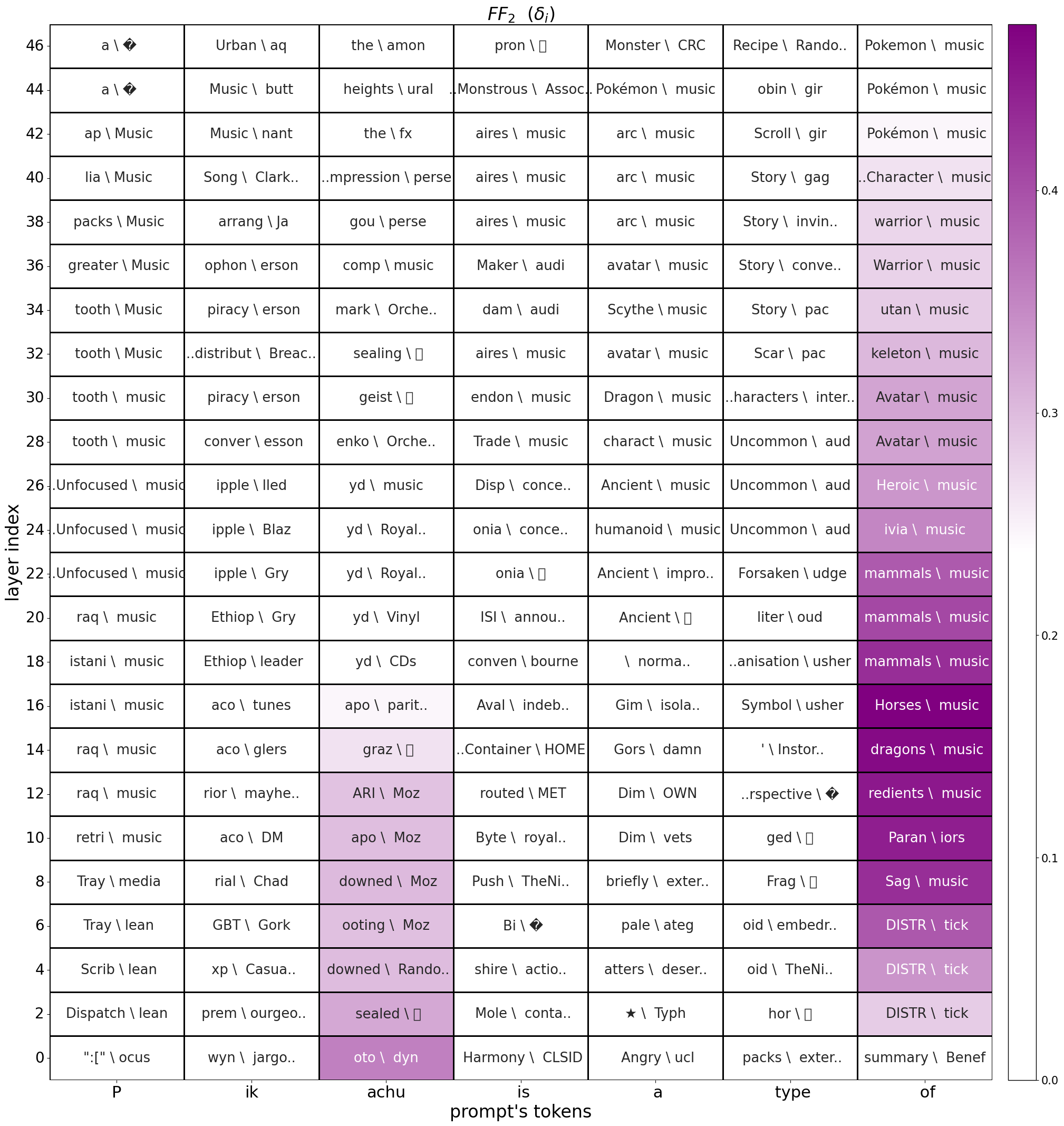}
\caption{GPT2-xl $FF_2$ gradients via our spanning set interpretation and \textbf{normalized Logit Lens}. 
Every cell shows the most $<$ probable \textbackslash improbable $>$ tokens for its $\delta_i$.
Compared to \autoref{fig:table_xl_pickachu}, we observe fewer non-English tokens in the projections and more tokens associated with the target token ``music''.}
\label{fig:table_xl_pickachu_normed}
\end{figure*}

\clearpage
\clearpage

\section{Editing Based on the ``Shift'' Mechanism}
\label{Manual Gradients Editing: appendix}
\label{appendix: editing}

\begin{table*}[ht!]
\vskip 0.15in
\begin{center}
\begin{sc}
\begin{tabular}{lccc}
\toprule
Method & Efficacy $\uparrow$ & paraphrase $\uparrow$ & neighborhood $\uparrow$  \\
\midrule
original model & 0.4$\pm$ 6.31  &  0.4$\pm$ 4.45  &  11.21$\pm$ 19.77
\\
Fine-tuning (mlp 0) & 96.37$\pm$ 18.7  &  7.46$\pm$ 19.44  &  10.12$\pm$ 18.02  
\\
Fine-tuning (mlp 35) & 100.0$\pm$ 0.0  &  46.1$\pm$ 40.25  &  5.59$\pm$ 13.19
\\
\midrule
MEND & 71.4$\pm$ 45.19  &  17.6$\pm$ 31.47  &  7.73$\pm$ 16.27 
\\
ROME &  99.4$\pm$ 7.72  &  71.9$\pm$ 37.22  &  10.91$\pm$ 19.53 
\\
MEMIT & 79.4$\pm$ 40.44  &  40.7$\pm$ 41.64  &  10.98$\pm$ 19.44 
\\
\textbf{Forward pass shift}  &  99.4$\pm$ 7.72  &  41.55$\pm$ 41.67  &  6.02$\pm$ 14.0
\\
\bottomrule
\end{tabular}
\end{sc}
\caption{Single editing results of GPT2-xl on CounterFact benchmark}
\label{tab: Manual grad appendix - editing results}
\end{center}
\vskip -0.1in
\end{table*}

\begin{table*}[ht!]
\vskip 0.15in
\begin{center}
\begin{sc}
\begin{tabular}{lcc}
\toprule
Method & N-gram entropy $\uparrow$ & WikiText Perplexity $\downarrow$ \\
\midrule
original model & 626.94$\pm$ 11.56  &  93.56$\pm$ 0.0
\\
Fine-tuning (mlp 0) & 618.81$\pm$ 52.74  &  199.84$\pm$ 1392.66  
\\
Fine-tuning (mlp 35) & 618.5$\pm$ 28.57  &  103.42$\pm$ 11.69
\\
\midrule
MEND & 623.94$\pm$ 23.14  &  94.96$\pm$ 0.66 
\\
ROME &  622.78$\pm$ 20.54  &  137.38$\pm$ 786.35 
\\
MEMIT & 627.18$\pm$ 12.29  &  93.6$\pm$ 0.11
\\
\textbf{Forward pass shift}  &  622.45$\pm$ 21.46  &  93.66$\pm$ 1.08  
\\
\bottomrule
\end{tabular}
\end{sc}
\caption{Examining the general ability of GPT2-xl to generate text after it has been applied with a single edit from CounterFact.}
\label{tab: Manual grad appendix - generate text abilities}
\end{center}
\vskip -0.1in
\end{table*}

In \autoref{Editing Directly with Target Embedding} we introduce ``forward pass shifting'', a method for LM knowledge editing through the approximation of the gradient matrix. In this section, we provide additional results and implementation details.
We want to remind that our intention is not to propose a new alternative editing method, but rather to explore the concept of injecting knowledge into the model in an approximate manner to how backpropagation operates.

\subsection{Comparison with Naive Backpropagation}
\label{Comparison with Naive Backpropagation}

We check our method's ability to perform a single edit at a time of a given prompt, such that after editing, the most probable answer will be a selected target token. 
Our baselines for comparison are performing the same single Backpropagation editing with SGD and Adam.
We use 50 samples from CounterFact \cite{meng2022locating} for the edited prompts and target.
For each editing method and sample, we measure the method's sensitivity to different learning rates by checking a range of possible values and finding the minimum one to achieve a successful edit.
We also monitor the model's general degradation after its update, using perplexity on maximum 256-tokens-long samples from WikiText \cite{merity2016pointer}. 
The findings are presented in \autoref{fig:Manual grad main paper figures}. Evidently, our approach's minimum learning rate shows less variation compared to SGD. Furthermore, its stability to editing, as indicated by perplexity, surpasses that of Adam and SGD in earlier layers and is identical in the latter ones.

\begin{figure}[t]
\centering
\subcaptionbox{Minimum learning rate for achieving a successful edit}{\includegraphics[width=1\columnwidth]{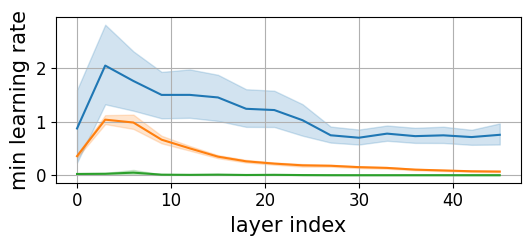}}
\subcaptionbox{Perplexity at minimum learning rate editing}{\includegraphics[width=1\columnwidth]{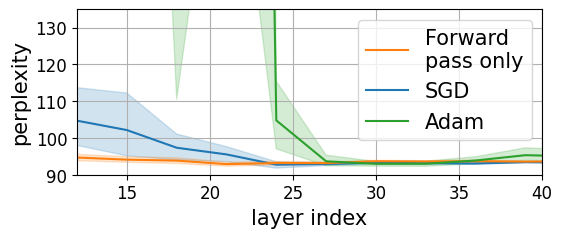}}
\caption{Editing only a single $FF_2$ matrix of GPT2-xl on CounterFact. By measuring a range of possible learning rate values we found the minimum that achieves a successful edit and measured the change in the model's general knowledge through perplexity on WikiText.}
\label{fig:Manual grad main paper figures}
\end{figure}

\subsection{Benchmark Implementation Details}
Based on the results from \autoref{Comparison with Naive Backpropagation}, we identified layers $30-40$ as the best potential editing layers. As for the learning rate, we examined values ranging from $0.14$ to $0.26$.
The presented results use layer 35 with a learning rate of $0.24$, which yielded the best results in the following benchmark.

The construction of the approximated gradient matrix is accomplished using the embedding of the target token.
This embedding is obtained from the decoding matrix. If the target token consists of more than one token according to the model's vocabulary, we select the prefix (the first token) to construct the target token.

Regarding the other editing methods and CounterFact benchmark implementation, we followed the implementations provided by \citet{meng2022mass} to create the results ourselves.
The post-editing metrics included in this benchmark, which we present, are as follows: (1) ``Efficacy'' - the accuracy (percentage of times) with which the model predicts the targeted token as the most probable one given the editing prompt. (2) ``Paraphrase'' - the accuracy of the model to answer paraphrases of the edited prompt (also known as ``Generalization''). (3) ``Neighborhood'' - the accuracy of the model on prompts from similar domains as the edited prompt, which we do not wish to change (also known as ``Specificity'').
(4) ``N-gram entropy'' measures the weighted average of bi- and tri-gram entropies, reflecting fluency level.
In addition, we included in the benchmark the perplexity score of the 100 first sentences from WikiText (version wikitext-2-raw-v1, test split\footnote{\url{https://huggingface.co/datasets/wikitext}}) that are at least 30 characters long. Moreover, each prompt was truncated to its first 256 tokens. The score of this metric reflects how much the model's original ability to generate text has changed.

The comparison was conducted on 1000 samples from CounterFact, and we benchmarked against state-of-the-art methods, such as ROME \cite{meng2022locating}, MEMIT \cite{meng2022mass}, and MEND \cite{mitchell2021fast}.

\subsection{Results}
In \autoref{tab: Manual grad appendix - editing results}, we present the editing results regarding the model's ability to modify internal knowledge. In \autoref{tab: Manual grad appendix - generate text abilities}, we analyze the effect of editing on the model's ability to generate generic text.

If we filter out editing methods that cause drastic degradation to the model's ability to generate text (where n-gram entropy falls below $620$), our method achieves the best editing results on the edited prompt (Accuracy), tying only with ROME. With regards to editing the paraphrased prompts, the ``forward pass shifting'' method achieves comparable results, but falls behind ROME.
However, one of our method's limitations lies in its performance with neighborhood prompts, where the edited model altered prompts we do not anticipate to be changed.

\subsection{Discussion}
``Forward pass shift'' demonstrates successful knowledge editing compared to methods that employ much more complex implementations. Additionally, our method shows minimal impact on the model's ability to generate text, addressing one of the main challenges of fine-tuning in precise knowledge editing.

From our understanding, ``forward pass shift'' is the simplest editing method in terms of algorithm complexity, as it only requires a single forward pass.
The low score of this method in terms of neighborhood editing may be attributed to mistakenly editing activation patterns of $FF_2$ that are also shared with similar prompts.
Future studies could explore the capability of ``forward pass shift'' to handle multiple edits at once, or to incorporate multiple iterations in its implementation (multiple forward passes).

\end{document}